\documentclass[11pt, a4paper, logo, copyright, nonumbering]{map}
\usepackage{CJKutf8}
\usepackage{xargs}  
\usepackage{todonotes}
\usepackage{amsmath}
\usepackage{dsfont}

\usepackage{longtable}

\usepackage{svg}
\usepackage{fontawesome}
\usepackage{array}
\usepackage{tabularx}
\usepackage{latexsym}
\usepackage{graphicx}
\usepackage[utf8]{inputenc} 
\usepackage[utf8]{inputenc} 
\usepackage{amssymb} 
\usepackage{url}            
\usepackage{amsfonts}       
\usepackage{nicefrac}       
\usepackage{microtype}      
\usepackage{xspace}

\usepackage{listings}
\usepackage{makecell}
\usepackage{inconsolata}
\usepackage{multicol}
\usepackage{amstext}

\definecolor{darkblue}{RGB}{84, 112, 198}

\definecolor{lightblue}{rgb}{0.85, 0.95, 1.0}    
\definecolor{lightgreen}{rgb}{0.90, 1.0, 0.90}    
\definecolor{lightorange}{rgb}{1.0, 0.95, 0.85}   
\definecolor{lightpurple}{rgb}{0.95, 0.90, 1.0}   
\definecolor{lightgray}{rgb}{0.97, 0.97, 0.97}    

\usepackage{tikz}
\usetikzlibrary{calc}
\definecolor{battery-empty}{rgb}{0.9, 0.9, 0.9}
\newcommand{\difficultybar}[1]{%
  \begin{tikzpicture}[baseline, scale=0.5, every node/.style={scale=0.8}]
    \foreach \i in {1,2,3,4,5} {
      \ifnum\i>#1
        \draw[fill=battery-empty] (\i*0.5-0.5, 0) rectangle (\i*0.5, 0.25);
      \else
        \pgfmathsetmacro{\colorlevel}{80 - 12*(\i)} 
        \edef\x{\noexpand\draw[fill=blue!\colorlevel!white, opacity=0.9] (\i*0.5-0.5, 0) rectangle (\i*0.5, 0.25);}
        \x
        \draw[blue!50!black] (\i*0.5-0.5, 0) rectangle (\i*0.5, 0.25);
      \fi
    }
    \fill[battery-empty!70] (2.5, 0.08) rectangle (2.6, 0.17);
    \draw[battery-empty!70!black] (2.5, 0.08) rectangle (2.6, 0.17);
  \end{tikzpicture}%
}
\renewcommand{\arraystretch}{0.96}

\definecolor{hidden-draw}{RGB}{20,68,106}
\definecolor{hidden-pink}{RGB}{255,245,247}
\usepackage[edges]{forest}

\usepackage{bm}
\usepackage{graphicx}
\usepackage{booktabs}
\usepackage{enumitem}
\usepackage{booktabs} 
\usepackage{array} 
\usepackage{xcolor}
\usepackage{soul}
\usepackage{makecell}
\usepackage{natbib}
\usepackage{CJKutf8}
\usepackage{xargs}
\usepackage{todonotes}
\usepackage{multirow}
\usepackage{cleveref}
\usepackage{subcaption}
\usepackage{url}
\usepackage{svg}
\usepackage{fontawesome}
\usepackage{xcolor}
\usepackage{float}
\usepackage{array}
\usepackage{ragged2e}
\usepackage[utf8]{inputenc}
\usepackage{booktabs}
\usepackage{array}
\usepackage{tabularx}
\usepackage{xcolor,colortbl}  
\definecolor{boxcolor}{HTML}{d92523} 
\definecolor{bulbcolor}{HTML}{e3b87f} 
\usepackage{url}
\usepackage{ragged2e}   
\usepackage{makecell} 

\usepackage{hyperref}       
\usepackage{url}            
\usepackage{booktabs}       
\usepackage{nicefrac}       
\usepackage{microtype}      
\usepackage{xcolor}         
\usepackage{graphicx}
\usepackage{longtable}
\usepackage{array}
\usepackage{pbox}
\usepackage{tabularx}
\usepackage{multirow}
\usepackage{wrapfig}
\usepackage{pifont}
\usepackage{lipsum}
\usepackage{subcaption}
\usepackage{enumitem}
\usepackage{tikz}
\usepackage{xstring}

\usepackage{color-edits}

\usepackage{booktabs}
\usepackage{multirow}
\usepackage[table]{xcolor}
\usepackage{tabularx}

\usepackage{parskip} 

\usepackage{threeparttable} 
\usepackage{tabularx}
\usepackage{array}
\usepackage{ragged2e}
\usepackage{booktabs}

\newcolumntype{Y}{>{\RaggedRight\arraybackslash}X}
\usepackage{algorithm}
\usepackage{algpseudocode}

\usepackage{xspace}
\usepackage{times}
\usepackage{latexsym}
\usepackage{booktabs}
\usepackage{fvextra}
\usepackage{graphicx}
\usepackage{subcaption} 
\usepackage[T1]{fontenc}
\usepackage{xcolor}
\usepackage[utf8]{inputenc}

\usepackage{microtype}

\usepackage{inconsolata}
\usepackage{amsmath}
\usepackage{booktabs}
\usepackage{graphicx}
\usepackage{pifont}
\usepackage{multirow}          
\usepackage{colortbl}          
\usepackage{algorithm}
\usepackage{algpseudocode}
\usepackage{xcolor}

\usepackage{amssymb}
\usepackage{amsthm}

\theoremstyle{definition}

\theoremstyle{plain}
\newtheorem{proposition}{Proposition}
\newtheorem{corollary}{Corollary}

\definecolor{rliableolive}{HTML}{BBCC33}
\definecolor{rliableblue}{HTML}{77AADD}
\definecolor{rliablered}{HTML}{f63c44}

\algrenewcommand\algorithmicrequire{\textbf{Input:}}
\algrenewcommand\algorithmicensure{\textbf{Output:}}
\algrenewcommand\alglinenumber[1]{\small #1:}

\usepackage{alltt}
\usepackage[most,skins,theorems]{tcolorbox}
\tcbuselibrary{skins,breakable,fitting,hooks}  
\tcbset{
  aibox/.style={
    width=\linewidth,
    top=7pt,
    bottom=8pt,
    colback=white,
    colframe=black,
    colbacktitle=black,
    enhanced,
    center,
    attach boxed title to top left={yshift=-0.1in,xshift=0.15in},
    boxed title style={boxrule=0pt,colframe=white},
  }
}
\tcbset{
  aibox2/.style={
    width=\linewidth,
    top=7pt,
    bottom=2pt,
    colback=green!18!white,
    colframe=black,
    colbacktitle=black,
    enhanced,
    center,
    attach boxed title to top left={yshift=-0.1in,xshift=0.15in},
    boxed title style={boxrule=0pt,colframe=white,},
  }
}
\newtcolorbox{AIbox}[2][]{aibox,title=#2,#1}
\newtcolorbox{AIbox2}[2][]{aibox2,title=#2,#1}

\usepackage{xcolor}

\hypersetup{
    colorlinks=true,            
    linkcolor=blue,             
    filecolor=magenta,          
    urlcolor=cyan,              
    citecolor=purple,             
    pdftitle={Overleaf Example},
    pdfpagemode=FullScreen,
}

\definecolor{iquestblue}{HTML}{173C7F}
\definecolor{iquestazure}{HTML}{528FCC}

\newcommandx{\info}[2][1=]{\todo[linecolor=red,backgroundcolor=red!25,bordercolor=red,#1]{#2}}

\title{
\vspace{-0.2in}
\centering \fontsize{15pt}{16pt}\selectfont
FORT-Searcher: Synthesizing Shortcut-Resistant Search Tasks \\
for Training Deep Search Agents

\vspace{-0.2in}
}
\author{
Jia Deng\textsuperscript{1*},
Yimeng Chen\textsuperscript{2*},
Xiaoqing Xiang\textsuperscript{1*},
Ziyang Zeng\textsuperscript{3*},
Shuo Tang\textsuperscript{4*},
Wayne Xin Zhao\textsuperscript{1$\dagger$},\\
\normalfont
Feng Chang\textsuperscript{3$\dagger$$\ddagger$},
Chuan Hao\textsuperscript{3},
Yuan Wei\textsuperscript{3},
Ran Tao\textsuperscript{3},
Bryan Dai\textsuperscript{3},
Ji-Rong Wen\textsuperscript{1}\\
\normalfont
\textsuperscript{1}Gaoling School of Artificial Intelligence, Renmin University of China, \\
\normalfont
\textsuperscript{2}KAUST,
\normalfont
\textsuperscript{3}IQuest Research,
\textsuperscript{4}Shanghai Jiao Tong University \\
\normalfont
\normalsize{
\textsuperscript{*}Equal contributors \quad
\textsuperscript{$\dagger$}Corresponding Authors \quad
\textsuperscript{$\ddagger$}Project Leader \\
\normalfont
Email: \texttt{dengjia0510@outlook.com},
\texttt{yimeng.chen@kaust.edu.sa},\\
\normalfont
\texttt{batmanfly@gmail.com},
\texttt{fchang@iquestlab.com}
}
}

\newcommand{\ignore}[1]{}

\usepackage[most]{tcolorbox}
\definecolor{darkorange}{RGB}{255, 140, 0}
\definecolor{lightgreen}{RGB}{145, 204, 117}
\definecolor{lightyellow}{RGB}{250, 200, 88}
\definecolor{lightred}{RGB}{238, 102, 102}
\definecolor{lightblue}{RGB}{115, 192, 222}

\definecolor{gray_1}{HTML}{B7B7B7}
\definecolor{gray_2}{HTML}{F0F0F0} 
\definecolor{frame_blue}{HTML}{A9D18E}

\newtcolorbox[auto counter, number within=section]{PromptBox}[2][]{
    enhanced,
    breakable,
    colback=gray_2, 
    colframe=gray_1,
    coltitle=white,
    fontupper=\small,
    fonttitle=\bfseries,
    title={#2}, 
    label={#1},
    arc=2pt,
    boxrule=1pt,
    left=2mm, right=2mm, top=2mm, bottom=2mm,
}

\usepackage{etoolbox}
\makeatletter
\patchcmd{\@maketitle}
  {\@toptitlebar}
  {%
    \hbox to \textwidth{%
    
    \includegraphics[height=0.7cm]{figures/aibox-logo.png}%
      \hfill
      \includegraphics[height=0.7cm]{figures/logo2.pdf}%
    }%
    \vspace{0.3em}
    \@toptitlebar
  }
  {}{}
\makeatother

\begin{abstract}
Training deep search agents requires verifiable questions whose answers remain unavailable until sufficient evidence has been acquired through search. Existing synthesis methods often increase apparent difficulty by enriching graph structures, but structural complexity alone does not guarantee realized search difficulty: the intended search process can collapse through a cheaper identifying route. We formalize this gap with a shortcut-aware difficulty framework and identify four actionable shortcut risks: evidence co-coverage, single-clue selectivity, exposed constants, and prior-knowledge binding. To diagnose their realized effects, we use trajectory signatures including solving cost, answer hit time, and prior-shortcut rate. Guided by this framework, we introduce \textsc{FORT}, a \textbf{F}ramework \textbf{o}f Shortcut-\textbf{R}esistant \textbf{T}raining-Data Synthesis. \textsc{FORT} constructs shortcut-resistant training data by controlling shortcut risks across entity selection, evidence graph construction, question formulation, and adversarial refinement. Experiments show that \textsc{FORT} induces longer pre-answer search and fewer shortcut patterns than existing open-source deep search datasets. Using the resulting trajectories, we train \textsc{FORT}-Searcher with supervised fine-tuning (SFT) only, and it achieves the best overall performance among comparable-size open-source search agents on challenging deep search benchmarks. Relevant resources will be made available at \url{https://github.com/RUCAIBox/FORT-Searcher}.
\end{abstract}

\begin{document}

\maketitle

\let\oldthefootnote\thefootnote

\begin{figure*}[h]
    \centering
    \includegraphics[width=0.95\textwidth]{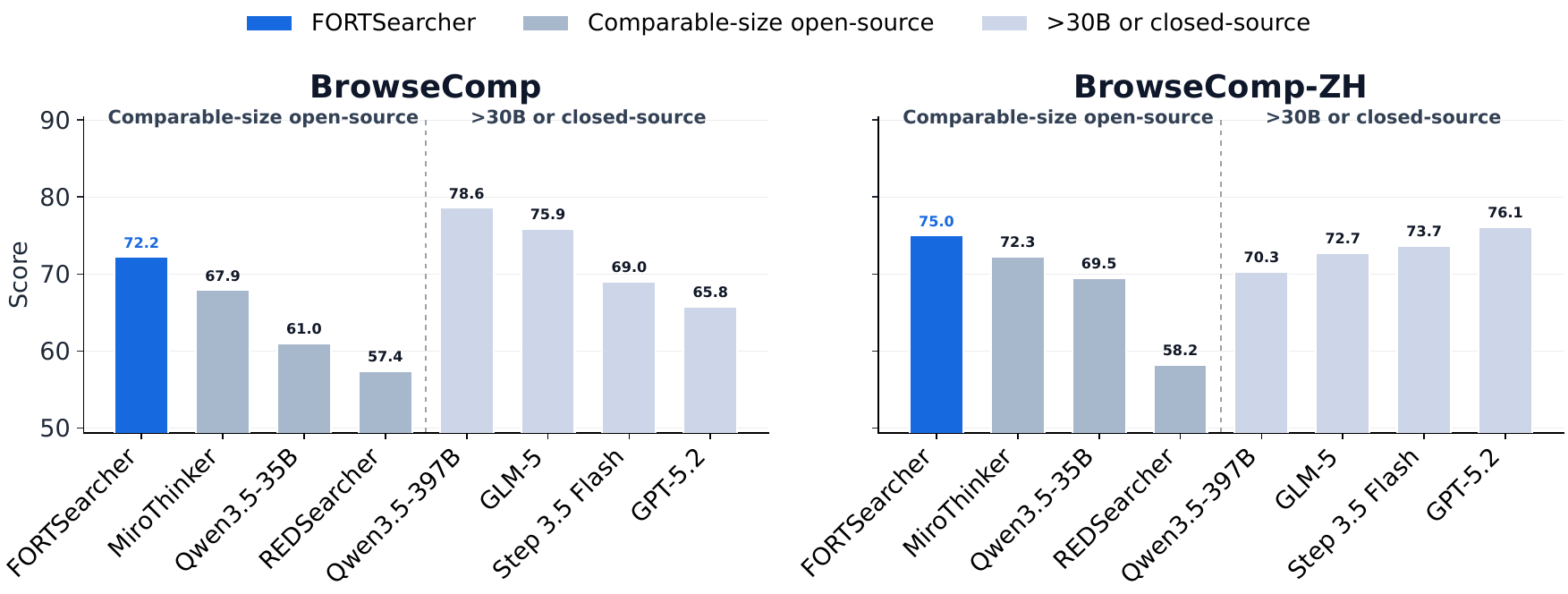}
    \caption{Performance of \textsc{FORT}-Searcher against other search agents on BrowseComp and BrowseComp-ZH.}
    \label{fig:abstract_benchmarks}
\end{figure*}
\section{Introduction}
\label{sec-intro}

Large language models (LLMs) are increasingly deployed as tool-empowered search agents~\citep{chen2025cpo,hu2026agentfugueagentscalinglonghorizon} that interact with external environments, gather evidence over multiple turns, and synthesize scattered information before arriving at an answer. The target capability in this setting is not merely retrieval, but persistent evidence discovery on the open web: an agent must formulate and iteratively revise search queries, locate hard-to-find information, assess the reliability of retrieved evidence, and connect disparate clues before committing to a final response~\citep{chen2025iterresearch,qiao2025webresearcher,hu2026samstateadaptivememorylonghorizon,zeng2026optimizing}.

Training effective search agents requires high-quality search data built around verifiable questions whose answers must be acquired through search. With such data, search agents can be trained using supervised fine-tuning or reinforcement learning~\citep{song2025smartsearcher,song2025r1searcher}. However, existing question-answering datasets designed for traditional retrieval are not a good fit for this purpose, because they often do not require long-horizon evidence acquisition~\citep{yang2018hotpotqa,ho2020constructing}. 

To address this limitation, recent deep search task synthesis methods often increase apparent difficulty by enriching the intended solving structure, including hop count and graph shape~\citep{DBLP:conf/emnlp/SunSWRJZBDZLFWW25,trivedi2022musique,tao2025webshaper,team2026mirothinker}, hierarchical constraints~\citep{xia2025open}, and evidence dispersion or treewidth~\citep{chu2026redsearcher}. These controls are useful because they specify an intended multi-step evidence-acquisition process. However, the underlying structural priors used to generate the question-to-answer chain do not necessarily translate into the realized search process when the task is actually executed by the search agent. 
The agent may still bypass much of the intended evidence acquisition by leveraging evidence from the environment or its own prior knowledge. As a result, training on such data yields limited benefit because it seldom incentivizes LLMs to engage in long-horizon search, planning, and information synthesis.

To characterize this mismatch, we develop a shortcut-aware difficulty framework
for multi-constraint agentic retrieval. The framework models deep search
difficulty as depending not only on the complexity of the intended task
structure, but also on how the resulting question is realized in the retrieval
environment and by the solver. A question may contain many clues and be
constructed from a long evidence graph, yet still become easy if the solver can
identify the answer through a cheaper route during actual search. Such failures
can arise when a clue is overly selective, several intended clues are co-covered
by the same retrieved evidence, the question exposes searchable constants that
should have been discovered through earlier retrieval, or the model binds the
answer from prior knowledge before retrieval provides support. We refer to these
route-level collapses, together with solver-level prior binding, as
\emph{shortcuts}. This view suggests that data synthesis should not only build
complex intended structures, but also check whether the resulting questions
still force the intended evidence-acquisition process in real search
trajectories.

This framework leads to four actionable shortcut risks that should be controlled during data synthesis: \textit{evidence co-coverage}, where multiple clues
can be verified from the same evidence item; \textit{single-clue selectivity},
where one clue or a small clue subset is already enough to identify the answer;
\textit{exposed constants}, where names, strings, or numbers on the question
surface make downstream queries directly executable; and \textit{prior-knowledge
binding}, where the solver names the answer before retrieval anchors it. Since
these risks are only partially visible from the final question alone, we monitor
their realized effects through trajectory signatures, including solving cost,
answer hit time, and prior-shortcut rate.

Guided by this framework, we introduce \textsc{FORT}, a \textbf{F}ramework \textbf{o}f Shortcut-\textbf{R}esistant \textbf{T}raining-Data Synthesis.  \textsc{FORT} controls
shortcut risks across the full data construction process. It selects long-tail
root entities to reduce prior-knowledge binding, initializes and expands
evidence graphs to avoid linearly exposed structures, collects facts from
heterogeneous sources to reduce evidence co-coverage, constructs derived facts
that are less likely to appear verbatim in a single source, and selects facts
that are weak in isolation but identifying in combination. During question
formulation, it withholds intermediate entity names and fuzzes exact constants
into truthful ranges, categories, or indirect descriptions. Finally,
adversarial refinement runs a strong search agent against each draft question
and repairs shortcut-prone, ambiguous, or over-obfuscated cases.

Experiments show that \textsc{FORT} produces useful supervision for
deep search agents. With supervised fine-tuning alone, we introduce
\textsc{FORT}-Searcher, which achieves the best overall performance among
comparable-size open-source agents on challenging benchmarks, including
BrowseComp, BrowseComp-ZH, xbench-DeepSearch, and Seal-0. Further analyses show
that \textsc{FORT} induces longer pre-answer search than existing open-source
deep search datasets and reduces multiple shortcut patterns, supporting the
importance of shortcut-resistant data synthesis.

Our contributions are as follows:
\begin{itemize}
    \item We develop a difficulty framework for multi-constraint agentic retrieval
tasks, showing that realized difficulty is governed by the cheapest identifying
route and diagnosing four actionable shortcut risks: evidence co-coverage,
single-clue selectivity, exposed constants, and prior-knowledge binding.
    
    \item We propose \textsc{FORT}, a \textbf{F}ramework \textbf{o}f Shortcut-\textbf{R}esistant \textbf{T}raining-Data Synthesis that turns shortcut analysis into construction controls across entity selection, evidence graph construction, question formulation, and adversarial refinement. Trajectory diagnostics show that \textsc{FORT} tasks induce substantially longer pre-answer search than existing open-source deep search datasets.

\item We train \textsc{FORT}-Searcher with SFT on \textsc{FORT} data and achieve the best average performance among comparable-size open-source search agents on challenging deep search benchmarks, demonstrating the effectiveness of our method.
\end{itemize}
\section{Difficulty Framework}
\label{dif}

We study a class of \emph{multi-constraint agentic retrieval} tasks. In such a
task, an agent solves a question through multi-turn interaction with a retrieval
interface, such as a web search. At each
turn, the agent may issue a query, observe retrieved evidence, and decide
whether to continue searching or return an answer. For readability, we summarize the main notation used in the difficulty
framework in Appendix~\ref{app:notation}.

\subsection{Problem Formulation}
\label{sec:problem_formulation}
A task instance is represented as
\begin{equation}
q=(\mathcal{X},\mathcal{C}_q,\Sigma),
\label{eq:problem_tuple}
\end{equation}
where \(\mathcal{X}\) is the answer space, \(\mathcal{C}_q\) is the set of
constraints expressed by the question, and \(\Sigma\) is the retrieval
interface. We use a botanist-identification question as a running example throughout this section. Here the answer space
\(\mathcal{X}\) is the set of candidate botanists. The
question does not name the target botanist directly. Instead, it describes the
target through several indirect constraints: the botanist described a fern
species whose specific epithet is derived from a mountain range; the botanist's
doctoral advisor also supervised another botanist known for discovering a
certain orchid; and the botanist served as president of a botanical society in a
year whose digits sum to seven. In this example, \(\mathcal{C}_q\) contains
these clue-like conditions. The retrieval interface \(\Sigma\) specifies what
information the agent can obtain through tool calls. For instance, when
\(\Sigma\) is a web search engine, a query may return web snippets that
help verify which candidate in \(\mathcal{X}\) satisfies the constraints in
\(\mathcal{C}_q\).

For any subset of constraints \(\mathcal{P}\subseteq\mathcal{C}_q\), we use
\(\mathrm{Ans}(\mathcal{P})\) to denote the remaining candidate pool after only
the clues in \(\mathcal{P}\) are applied. That is, it contains all entities in
\(\mathcal{X}\) that satisfy every constraint in \(\mathcal{P}\):
\begin{equation}
\mathrm{Ans}(\mathcal{P})
=
\left\{
x\in\mathcal{X}:
\bigwedge_{c_i\in\mathcal{P}} c_i(x)=1
\right\}.
\label{eq:ans_subset}
\end{equation}
For example, if \(\mathcal{P}\) contains only the fern-species clue in the
botanist example, then \(\mathrm{Ans}(\mathcal{P})\) is the set of botanists
matching that clue; adding more clues to \(\mathcal{P}\) further narrows this
candidate pool. We assume the question is well-posed: the full constraint set
determines a unique gold answer \(y^\star\), i.e.\
\(\mathrm{Ans}(\mathcal{C}_q)=\{y^\star\}\).

Each retrieval action is a query \(\theta\) submitted to \(\Sigma\), which
returns a set of evidence items. A search trajectory is the sequence of queries
a solver issues before committing to its answer,
\(\tau=(\theta_1,\dots,\theta_{|\tau|})\); its length \(|\tau|\) counts these
retrieval queries, and \(\tau\) succeeds when the committed answer is
\(y^\star\). We count only retrieval, since that is the cost of interest.

\subsection{Difficulty Analysis}
\label{sec:structural_lower_bound}

Task difficulty has two sources. One is objective: the structure of the constraints and the gold answer, together with what the interface \(\Sigma\) exposes and how it connects evidence, which fixes how much retrieval any evidence-only solver is forced to perform. The other is solver-specific: a concrete model may bypass part of that search through prior knowledge. We analyze the main factors behind both sources and derive a lower bound on the task-side cost \(D_{\mathrm{post}}(q)\); the solver-specific factor then relates
this bound to the realized cost \(\Omega(q,\pi_0)\).

To measure the search cost imposed by the task itself, we first define a reference solver that does not use problem-specific prior knowledge. Let \(\Pi_{\mathrm{post}}\) be the class of no-prior, no-guessing solvers. A solver in this class issues only executable queries---those whose required constants are already available from the question or from previously retrieved evidence---and commits to an answer \(x\) only once retrieved evidence verifies that \(x\) satisfies some subset \(\mathcal{P}\) with
\(\mathrm{Ans}(\mathcal{P})=\{x\}\), i.e.\ a subset that already identifies the
answer. We define the pure-posterior cost as
\begin{equation}
D_{\mathrm{post}}(q)
=
\inf_{\pi\in\Pi_{\mathrm{post}}}
\mathbb{E}_{\tau\sim\pi}
\left[
|\tau|
\right].
\label{eq:d_post_def}
\end{equation}
The formal restrictions of \(\Pi_{\mathrm{post}}\), including query
executability and evidence-grounded termination, are given in
Appendix~\ref{app:pure_posterior_policy}. This quantity is the minimum expected
number of retrieval queries an evidence-only solver needs to answer \(q\): the
search cost the question imposes when the solver cannot rely on memorized facts
or unsupported guesses.

Even if the question carries many clues, such a solver need not verify all of them; it only needs evidence for some subset that already identifies the answer.
The difficulty forced by the question is therefore governed by the cheapest
route that verifies such a subset. For a subset \(\mathcal{P}\) with
\(\mathrm{Ans}(\mathcal{P})=\{y^\star\}\), let \(Q_\Sigma(\mathcal{P})\) be the
length of the shortest valid evidence-acquisition route that verifies
\(\mathcal{P}\) for \(y^\star\):
\begin{equation}
Q_\Sigma(\mathcal{P})
=
\min
\left\{
|\boldsymbol{\theta}|:
\boldsymbol{\theta}
\text{ is a valid evidence-acquisition route for }
(y^\star,\mathcal{P})
\text{ under } \Sigma
\right\}.
\label{eq:q_subset}
\end{equation}
Unlike a realized trajectory \(\tau\), \(\boldsymbol{\theta}\) is an ideal route for a known target \((y^\star,\mathcal{P})\). \(\tau\) is instead what a solver issues adaptively as it searches, which may run longer with redundant or dead-end queries. A fully formal definition of valid routes is given in Appendix~\ref{app:valid_routes}: a valid route must retrieve enough evidence to verify that \(y^\star\) satisfies \(\mathcal{P}\), and must respect query executability, so each query uses only constants present in the question or exposed by earlier retrieved evidence.

The structural lower bound is obtained by minimizing this route cost over all identifying subsets:
\begin{equation}
Q_\Sigma^\star
=
\min_{\mathcal{P}:\,\mathrm{Ans}(\mathcal{P})=\{y^\star\}}
Q_\Sigma(\mathcal{P}).
\label{eq:q_star}
\end{equation}
Under the no-prior, no-guessing reference solver, the pure-posterior cost is
lower-bounded by this cheapest identifying route:
\begin{equation}
D_{\mathrm{post}}(q)
\ge
Q_\Sigma^\star.
\label{eq:d_post_lower_bound}
\end{equation}
The proof is in Appendix~\ref{app:proof_structural_lower_bound}. The bound is
one-sided: a large \(Q_\Sigma^\star\) means even the cheapest identifying route
requires substantial evidence acquisition, while a small \(Q_\Sigma^\star\)
means the question does not force much search.

We now isolate three structural quantities that control \(Q_\Sigma^\star\) and
the gap above it. The first is \emph{subset selectivity}. For a constraint
subset \(\mathcal{P}\subseteq\mathcal{C}_q\) we define
\begin{equation}
s(\mathcal{P})
=
|\mathrm{Ans}(\mathcal{P})|,
\label{eq:selectivity}
\end{equation}
the size of the candidate pool after applying only the clues in \(\mathcal{P}\).
For example, with \(c_{\mathrm{fern}}\) the fern-species clue, if
\(\mathcal{P}=\{c_{\mathrm{fern}}\}\) leaves two plausible botanists then
\(s(\mathcal{P})=2\); if it leaves only the gold botanist then
\(\mathrm{Ans}(\mathcal{P})=\{y^\star\}\), and this single clue already
identifies the answer. Selectivity does not lower-bound the route length, so it
is not a component of the floor \(Q_\Sigma^\star\). Rather, it is the
candidate-breadth dimension governing the exploration gap
\(D_{\mathrm{post}}(q)-Q_\Sigma^\star\): a large remaining candidate space
typically forces a no-prior solver to generate and eliminate more candidates
than the cheapest route alone would require. When a small subset is selective
enough to identify the answer, selectivity instead controls which subsets enter
the minimization in Eq.~\eqref{eq:q_star}.

The second factor is \emph{evidence dispersion}. For a fixed subset of clues,
we next ask how much separate evidence-acquisition effort is needed to verify
it. We use \(M_{\mathrm{ev}}(\mathcal{P})\) to denote the
minimum number of queries needed to verify that \(y^\star\)
satisfies all constraints in \(\mathcal{P}\):
\begin{equation}
M_{\mathrm{ev}}(\mathcal{P})
=
\min
\left\{
|\boldsymbol{\theta}|:
\boldsymbol{\theta}
\text{ retrieves evidence verifying }
(y^\star,\mathcal{P})
\right\}.
\label{eq:evidence_dispersion}
\end{equation}
Note that \(M_{\mathrm{ev}}(\mathcal{P})\) ignores query
executability: it counts only how many distinct retrievals are needed to cover the verification, regardless of their order. Adding executability back to this requirement can only raise the cost, so
\(Q_\Sigma(\mathcal{P})\ge M_{\mathrm{ev}}(\mathcal{P})\). A small
\(M_{\mathrm{ev}}(\mathcal{P})\) means that little separate evidence
needs to be acquired. For example, let
\(\mathcal{P}=\{c_{\mathrm{fern}},c_{\mathrm{society}}\}\), where
\(c_{\mathrm{fern}}\) is the fern-species clue and \(c_{\mathrm{society}}\) is
the botanical-society presidency clue. If one web page states both facts, then a single evidence-acquisition step is sufficient, so
\(M_{\mathrm{ev}}(\mathcal{P})=1\).

The third factor is \emph{dependency depth}. A later query is dependent on an earlier one when it requires a name, attribute, or intermediate fact that is not available in the question but is exposed by earlier retrieved evidence. For example, in the botanist question, searching for other students of the same doctoral advisor requires first discovering the advisor's name. For a valid
evidence-acquisition route \(\boldsymbol{\theta}\), let
\(\mathrm{depth}(\boldsymbol{\theta})\) be the length of the longest dependency chain among its queries. We define
\begin{equation}
\mathrm{dep}(\mathcal{P})
=
\min_{\boldsymbol{\theta}\ \text{valid for }\mathcal{P}}
\mathrm{depth}(\boldsymbol{\theta}).
\label{eq:dependency_depth}
\end{equation}
A smaller \(\mathrm{dep}(\mathcal{P})\) means that the required evidence can be
reached through a shorter serial dependency. If the question directly exposes an intermediate constant, such as the advisor's name, then the downstream query becomes executable from the start and the dependency depth may become smaller.

Selectivity, dispersion, and depth are objective properties of the task and interface; together they determine \(D_{\mathrm{post}}(q)\), the cost any no-prior solver must pay. A concrete solver \(\pi_0\) may pay less, either by drawing on model-specific prior knowledge or by committing to the answer before retrieved evidence supports it. We write its actual successful-trajectory cost as
\begin{equation}
\Omega(q,\pi_0)
=
\mathbb{E}_{\tau\sim\pi_0}
\left[
|\tau|
\mid
\tau \text{ succeeds}
\right],
\label{eq:omega_def}
\end{equation}
and the solver-side cost reduction as
\begin{equation}
U_{\pi_0}(q)
=
D_{\mathrm{post}}(q)
-
\Omega(q,\pi_0).
\label{eq:prior_utility}
\end{equation}
A positive \(U_{\pi_0}(q)\) means \(\pi_0\) reaches the answer with fewer
retrieval queries than the no-prior reference solver would need: for instance, a
model already familiar with the target botanist may answer after very few
searches, whereas the evidence-only solver must follow the advisor and
society-record clues. The quantity is most directly interpretable when
\(\pi_0\) has a high success rate; otherwise it should be read together with
that rate.

We summarize the above discussion of the four factors' effects on difficulty in
the following proposition.

\begin{proposition}[Structural lower bound and realized cost]
\label{prop:difficulty_summary}
Let \(q\) be a well-posed instance and \(\pi_0\) any solver. Every identifying
subset \(\mathcal{P}\) satisfies
\begin{equation}
Q_\Sigma(\mathcal{P})\ge\max\!\left(M_{\mathrm{ev}}(\mathcal{P}),\,\mathrm{dep}(\mathcal{P})\right),
\label{eq:q_component_lower_bound}
\end{equation}
so the pure-posterior cost is bounded below by the cheapest identifying route,
\begin{equation}
D_{\mathrm{post}}(q)\;\ge\;Q_\Sigma^\star\;\ge\;
\min_{\mathcal{P}:\,\mathrm{Ans}(\mathcal{P})=\{y^\star\}}
\max\!\left(M_{\mathrm{ev}}(\mathcal{P}),\,\mathrm{dep}(\mathcal{P})\right).
\label{eq:difficulty_summary_bound}
\end{equation}
The first inequality holds with equality when the cheapest identifying route can be followed without exploration, and \(Q_\Sigma^\star=1\) when some identifying subset is verified by a single initially executable query. For a concrete solver, the realized successful-trajectory cost satisfies
\begin{equation}
\Omega(q,\pi_0)=D_{\mathrm{post}}(q)-U_{\pi_0}(q),
\label{eq:omega_decomposition}
\end{equation}
and reaches the lower bound \(Q_\Sigma^\star\) when, in addition,
\(U_{\pi_0}(q)=0\).
\end{proposition}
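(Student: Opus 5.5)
The plan is to prove the statement piece by piece, working directly from the definitions in Eqs.~\eqref{eq:q_subset}--\eqref{eq:prior_utility} and the lower bound Eq.~\eqref{eq:d_post_lower_bound} stated earlier, which we take as given.

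\textbf{Component bound.} Fix an identifying subset \(\mathcal{P}\). Let \(\boldsymbol{\theta}^\ast\) be any route attaining \(Q_\Sigma(\mathcal{P})\); it is a valid evidence-acquisition route for \((y^\star,\mathcal{P})\).
\begin{itemize}
\item Such a route in particular retrieves evidence verifying \((y^\star,\mathcal{P})\). So it is feasible for the minimization defining \(M_{\mathrm{ev}}(\mathcal{P})\), and \(M_{\mathrm{ev}}(\mathcal{P})\le|\boldsymbol{\theta}^\ast|\). This is the remark already made after Eq.~\eqref{eq:evidence_dispersion}: dropping executability only enlarges the feasible set.
\item A dependency chain among the queries of \(\boldsymbol{\theta}^\ast\) consists of distinct queries of that route. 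Hence \(\mathrm{depth}(\boldsymbol{\theta}^\ast)\le|\boldsymbol{\theta}^\ast|\). Since \(\boldsymbol{\theta}^\ast\) is valid for \(\mathcal{P}\), it is feasible in Eq.~\eqref{eq:dependency_depth}, so \(\mathrm{dep}(\mathcal{P})\le\mathrm{depth}(\boldsymbol{\theta}^\ast)\le Q_\Sigma(\mathcal{P})\).
\end{itemize}
Taking the maximum of the two gives Eq.~\eqref{eq:q_component_lower_bound}.

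\textbf{Chain of bounds.} Minimizing Eq.~\eqref{eq:q_component_lower_bound} over identifying subsets gives the right inequality of Eq.~\eqref{eq:difficulty_summary_bound}, because pointwise inequalities survive taking minima over the same index set. The left inequality is exactly Eq.~\eqref{eq:d_post_lower_bound}, proved in the appendix. Should that result need re-derivation, the argument is as follows.
\begin{itemize}
\item Any trajectory of a solver in \(\Pi_{\mathrm{post}}\) terminates only after verifying some identifying \(\mathcal{P}\), using executable queries.
\item Its query sequence therefore contains a valid route for \((y^\star,\mathcal{P})\). Removing redundant queries preserves validity, and \(|\tau|\) is at least the length of that route.
\item So \(|\tau|\ge Q_\Sigma(\mathcal{P})\ge Q_\Sigma^\star\) pointwise. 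Taking expectations and then the infimum over \(\pi\) gives the bound.
\end{itemize}

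\textbf{Equality and degenerate cases.} I read ``the cheapest identifying route can be followed without exploration'' as saying that some \(\pi\in\Pi_{\mathrm{post}}\) deterministically issues an optimal route \(\boldsymbol{\theta}^\ast\) for the minimizing \(\mathcal{P}\) and then commits. Then \(D_{\mathrm{post}}(q)\le|\boldsymbol{\theta}^\ast|=Q_\Sigma^\star\), which combined with the lower bound gives equality.

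For the case \(Q_\Sigma^\star=1\), suppose a single initially executable query \(\theta\) verifies some identifying \(\mathcal{P}\).
\begin{itemize}
\item Then \((\theta)\) is a valid route, so \(Q_\Sigma(\mathcal{P})\le1\) and hence \(Q_\Sigma^\star\le 1\).
\item Conversely \(Q_\Sigma^\star\ge1\). Any route verifying \(\mathcal{P}\) must retrieve at least one evidence item, assuming, as the definitions implicitly do, that the question text itself does not verify a constraint for \(y^\star\). So \(M_{\mathrm{ev}}\ge1\).
\end{itemize}

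\textbf{Realized cost.} Eq.~\eqref{eq:omega_decomposition} is an algebraic rearrangement of the definition Eq.~\eqref{eq:prior_utility}. If \(U_{\pi_0}(q)=0\), then \(\Omega(q,\pi_0)=D_{\mathrm{post}}(q)\). In the no-exploration regime this equals \(Q_\Sigma^\star\), which is how I will read ``in addition''.

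\textbf{Main obstacle.} The substantive step is the lower bound \(D_{\mathrm{post}}\ge Q_\Sigma^\star\). It requires showing that an adaptive, possibly randomized trajectory contains a valid route for the subset it terminates on. The care needed is to confirm that the appendix's notion of executability is monotone, so that deleting non-contributing queries preserves validity. I expect that to be the only nontrivial check; everything else is unfolding definitions and monotonicity of minima.
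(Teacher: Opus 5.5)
Your proposal is correct and follows essentially the same argument as the paper's appendix proofs. The component bound uses feasibility of a shortest route for the $M_{\mathrm{ev}}$ problem together with $|\boldsymbol{\theta}^\star|\ge\mathrm{depth}(\boldsymbol{\theta}^\star)\ge\mathrm{dep}(\mathcal{P})$. The bound $D_{\mathrm{post}}(q)\ge Q_\Sigma^\star$ comes from the no-guessing and executability restrictions, the collapse case from a one-query valid route, and the decomposition of $\Omega$ from the definition of $U_{\pi_0}$.

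One difference: the step you flag as the main obstacle is not needed. As the paper observes, a successful $\Pi_{\mathrm{post}}$ trajectory is itself a valid route for the subset it terminates on, because each query is executable at its own history and the accumulated evidence verifies $\mathcal{P}$. So $|\tau|\ge Q_\Sigma(\mathcal{P})$ holds directly, with no pruning of queries and no monotonicity check on executability.
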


\noindent
Eq.~\eqref{eq:q_component_lower_bound} and the collapse case are proved in
Appendices~\ref{app:proof_component_lower_bound} and \ref{app:proof_collapse}.
Evidence dispersion \(M_{\mathrm{ev}}\) and dependency depth \(\mathrm{dep}\)
form this lower bound: \(Q_\Sigma^\star\) is large only when every identifying
subset is either dispersed or deep. The two enter differently.
\(M_{\mathrm{ev}}\) does not depend on executability, so the smallest
\(M_{\mathrm{ev}}\) over identifying subsets already lower-bounds
\(D_{\mathrm{post}}(q)\); \(\mathrm{dep}\), by contrast, arises only from query
executability. Two further factors move the realized cost away from this bound:
subset selectivity sets the exploration gap
\(D_{\mathrm{post}}(q)-Q_\Sigma^\star\) (and, when a small subset is selective enough to identify the answer, which subsets enter the minimization), while the prior utility \(U_{\pi_0}\) sets the gap between \(D_{\mathrm{post}}(q)\) and the realized cost of a specific solver. Since each gap is attributed to a named factor, the four jointly consists the difficulty.

\subsection{Shortcut Patterns}
\label{sec:Shortcut}

We call a mechanism a \emph{shortcut} if it allows a question to be solved with
less evidence acquisition than intended, thereby reducing the realized retrieval
cost \(\Omega(q,\pi_0)\). According to
Eq.~\eqref{eq:omega_decomposition}, this reduction can happen in two ways. A
route-level shortcut makes the cheapest identifying route shorter, reducing
\(Q_\Sigma^\star\) in Eq.~\eqref{eq:q_star} and weakening the structural lower
bound on \(D_{\mathrm{post}}(q)\) in Eq.~\eqref{eq:d_post_lower_bound}. A
solver-level shortcut instead increases \(U_{\pi_0}(q)\), allowing a concrete
model to solve the task with less search than the no-prior reference solver. We
focus on four actionable shortcut patterns. The first three operate through
\(Q_\Sigma^\star\), while the fourth operates through \(U_{\pi_0}(q)\).

\paragraph{Single-clue selectivity.}
This route-level shortcut occurs when one clue, or a small clue subset, narrows
the candidate set to one or a few plausible answers. In terms of
Eq.~\eqref{eq:selectivity}, \(s(\mathcal{P})\) is very small for some proper
subset \(\mathcal{P}\subsetneq\mathcal{C}_q\). If such a subset already
identifies the answer (\(\mathrm{Ans}(\mathcal{P})=\{y^\star\}\)), then the
answer can be reached without using the full constraint set, making the minimum
in Eq.~\eqref{eq:q_star} attainable through a small subset. Even when the subset
is not identifying, its high selectivity can make the gold answer or near-answer
evidence appear early in finite retrieval results. This reduces realized
retrieval cost because the solver can focus on a small candidate set and reach
the answer before acquiring evidence for all intended constraints.

\paragraph{Evidence co-coverage.}
This route-level shortcut occurs when one retrieved evidence item, page, or
snippet verifies several intended constraints at once. In terms of
Eq.~\eqref{eq:evidence_dispersion}, this makes
\(M_{\mathrm{ev}}(\mathcal{P})\) small for some identifying subset
\(\mathcal{P}\). Since \(M_{\mathrm{ev}}(\mathcal{P})\) lower-bounds the route
cost in Eq.~\eqref{eq:q_component_lower_bound}, co-covered evidence can make
\(Q_\Sigma(\mathcal{P})\) small and may collapse the cheapest identifying route
\(Q_\Sigma^\star\) in Eq.~\eqref{eq:q_star}. This reduces realized retrieval
cost because several intended evidence-acquisition steps are compressed into a
single retrieved source.

\paragraph{Exposed constants.}
This route-level shortcut occurs when the question surface exposes an exact
name, string, date, number, or other intermediate constant that should have been
discovered through earlier retrieval. Such constants make downstream queries
executable from the initial question. In terms of Eq.~\eqref{eq:dependency_depth},
they reduce \(\mathrm{dep}(\mathcal{P})\) by shortening the serial dependency
needed to reach the relevant evidence. Since \(\mathrm{dep}(\mathcal{P})\) also
lower-bounds the route cost in Eq.~\eqref{eq:q_component_lower_bound}, exposed
constants can reduce \(Q_\Sigma(\mathcal{P})\) and hence \(Q_\Sigma^\star\).
This reduces realized retrieval cost because the solver can skip intermediate
search steps that were intended to make later queries executable.

\paragraph{Prior-knowledge binding.}
This solver-level shortcut occurs when a concrete solver names or commits to
the gold answer before retrieved evidence anchors it. Unlike the previous three
shortcuts, it does not necessarily reduce the structural route cost
\(Q_\Sigma^\star\) or the pure-posterior cost \(D_{\mathrm{post}}(q)\). Instead,
it directly lowers \(\Omega(q,\pi_0)\) by increasing the solver-side cost
reduction \(U_{\pi_0}(q)\) in Eq.~\eqref{eq:prior_utility}. Thus, even if the
task requires a long evidence-acquisition route for a no-prior solver, it can
still become easy for a model that binds the answer from parametric knowledge
before search provides support.

Together, these shortcuts explain how a multi-constraint retrieval task can
fail to induce substantial search. The first three make the task structurally
easier by shortening the cheapest identifying route, while the fourth makes the
task solver-specifically easier by reducing the cost paid by a particular
model. Representative trajectory-level examples of the four shortcut patterns
are provided in Appendix~\ref{app:shortcut_cases}.

\subsection{Trajectory Signatures}
\label{sec:trajectory_signatures}

The quantities above explain how difficulty can collapse, but they are not all
directly computable at scale. We therefore use observable trajectory
signatures to diagnose realized difficulty under a fixed solver and retrieval
interface. 

Let \(\tau_i\) be a successful trajectory for question \(q_i\). For \(N\) successful trajectories, the \emph{realized solving cost} is \begin{equation} \widehat{\Omega} = \frac{1}{N} \sum_{i=1}^{N} |\tau_i|. \label{eq:realized_solving_cost} \end{equation} This is the dataset-level empirical counterpart of \(\Omega(q,\pi_0)\) in Eq.~\eqref{eq:omega_def}. A high \(\widehat{\Omega}\) means that the solver used many retrieval calls, but it does not by itself prove that the answer was hard to discover. The answer may have appeared early, with later turns spent on verification or detours. We therefore need to measure not only how long the trajectory is, but also when the answer first becomes available. To measure answer availability, let \(T_{\mathrm{tool}}(\tau)\) be the first retrieval step whose observation contains the gold answer or a normalized alias, and let \(T_{\mathrm{model}}(\tau)\) be the first model step at which the model explicitly mentions the gold answer or a normalized alias in visible generated text. The \emph{answer hit time} of a trajectory is \begin{equation} T_{\mathrm{hit}}(\tau) = \min \left\{ T_{\mathrm{tool}}(\tau), T_{\mathrm{model}}(\tau) \right\}. \label{eq:answer_hit_time_single} \end{equation} For a dataset, we report the average answer hit time: \begin{equation} \overline{T}_{\mathrm{hit}} = \frac{1}{N} \sum_{i=1}^{N} T_{\mathrm{hit}}(\tau_i). \label{eq:answer_hit_time_dataset} \end{equation} A later \(\overline{T}_{\mathrm{hit}}\) indicates a longer pre-answer search prefix, which is the observable behavior expected when cheap identifying routes in Eq.~\eqref{eq:q_star} are suppressed. Conversely, an early \(\overline{T}_{\mathrm{hit}}\) suggests that the answer becomes available through a short route even if the full trajectory is long. A large gap between \(\widehat{\Omega}\) and \(\overline{T}_{\mathrm{hit}}\) therefore suggests post-hit verification, detours, or redundant searching after the answer has already surfaced. Finally, we estimate explicit prior-knowledge binding with the \emph{prior-shortcut rate}. A successful trajectory is marked as prior-bound if the model mentions the gold answer before any retrieved observation anchors it: \begin{equation} \widehat{p}_{\mathrm{prior}} = \frac{1}{N} \sum_{i=1}^{N} \mathbf{1} \left[ T_{\mathrm{model}}(\tau_i) < T_{\mathrm{tool}}(\tau_i) \right]. \label{eq:prior_shortcut_rate} \end{equation} This is a conservative observable proxy for solver-side prior binding. It corresponds to the shortcut mechanism captured by \(U_{\pi_0}(q)\) in Eq.~\eqref{eq:prior_utility}: when the model names the answer before retrieval evidence anchors it, the successful trajectory may pay less search cost than the no-prior reference solver. The proxy is conservative because it captures only visible answer-before-evidence behavior and may miss cases where the model uses prior knowledge without verbalizing the answer early. 

To examine whether existing open-source deep-search data induces substantial pre-answer search, we evaluate several datasets with the same strong search agent under the same retrieval budget. For each dataset, we randomly sample 200 questions and compute \(\widehat{\Omega}\), \(\overline{T}_{\mathrm{hit}}\), and \(\widehat{p}_{\mathrm{prior}}\) over successful trajectories. 
Table~\ref{tab:difficulty_comparison} shows that existing open-source deep-search datasets often provide limited pre-answer search. Some datasets have low realized solving cost and early answer hit time, suggesting that the answer becomes available after only a few retrieval steps. Others induce longer trajectories but still expose the answer early or show high prior-shortcut rates. For example, OpenSeeker has \(\widehat{\Omega}=84.7\), but its \(\overline{T}_{\mathrm{hit}}\) is only 9.3 and \(\widehat{p}_{\mathrm{prior}}\) reaches 31.9. REDSearcher has stronger trajectory signatures than the other open-source datasets, yet its answer hit time remains much earlier than its total solving cost. These observations suggest that apparent search length alone does not guarantee search-heavy supervision. Useful training data should reduce cheap identifying routes in Eq.~\eqref{eq:q_star}, delay answer exposure as reflected by \(\overline{T}_{\mathrm{hit}}\), and limit explicit prior binding as reflected by \(\widehat{p}_{\mathrm{prior}}\). This motivates \textsc{FORT}, which controls shortcut risks during entity selection, evidence graph construction, question formulation, and adversarial refinement.

\section{Methodology}
\label{sec:methodology}

Section~\ref{dif} characterizes two ways in which realized search difficulty
can collapse. At the route level, a question becomes easier when the cheapest
identifying route \(Q_\Sigma^\star\) in Eq.~\eqref{eq:q_star} is small, thereby
weakening the structural lower bound on \(D_{\mathrm{post}}(q)\) in
Eq.~\eqref{eq:d_post_lower_bound}. At the solver level, a concrete model may
further reduce the realized retrieval cost \(\Omega(q,\pi_0)\) in
Eq.~\eqref{eq:omega_decomposition} by increasing the solver-side cost reduction
\(U_{\pi_0}(q)\) in Eq.~\eqref{eq:prior_utility}. \textsc{FORT} treats data
synthesis as the inverse construction problem: it builds questions that
cheap identifying routes are less available and solver-side prior binding is
less likely.

Our methodology has two components. First, \textsc{FORT} synthesizes
shortcut-resistant questions and search trajectories by controlling how
constraints, evidence sources, and intermediate dependencies are constructed
before the final question is exposed to a solver. Second, the resulting
trajectories are used to train \textsc{FORT}-Searcher, an SFT-only search agent
with a context-managed inference protocol. This separation links the data-side
objective of reducing route-level and solver-level shortcuts to the model-side
objective of learning robust multi-turn evidence acquisition.

\subsection{\textsc{FORT}: Shortcut-Resistant Data Synthesis}
\label{sec:fort_synthesis}
The difficulty framework is defined over final questions, whereas
\textsc{FORT} must intervene before the question is verbalized. To do so, we
use an internal evidence graph as a construction workspace. In this graph, nodes
represent real-world entities, and edges represent verified facts that connect
entities and can later be verbalized as clues. The graph is not the object
solved by the final agent; instead, it organizes these entities and facts
together with their evidence sources and dependencies before they are rendered
as a natural-language question. By selecting and verbalizing subgraphs from this
workspace, \textsc{FORT} controls the route-level factors introduced in
Section~\ref{sec:structural_lower_bound}: clue selectivity
\(s(\mathcal{P})\) in Eq.~\eqref{eq:selectivity}, evidence dispersion
\(M_{\mathrm{ev}}(\mathcal{P})\) in Eq.~\eqref{eq:evidence_dispersion}, and
dependency depth \(\mathrm{dep}(\mathcal{P})\) in
Eq.~\eqref{eq:dependency_depth}. These factors affect whether the cheapest
identifying route \(Q_\Sigma^\star\) in Eq.~\eqref{eq:q_star} can collapse,
as characterized by the component lower bound in
Eq.~\eqref{eq:q_component_lower_bound}. In parallel, long-tail root selection
and adversarial refinement target solver-level prior binding, which increases
\(U_{\pi_0}(q)\) in Eq.~\eqref{eq:prior_utility} and reduces the realized cost
\(\Omega(q,\pi_0)\) through Eq.~\eqref{eq:omega_decomposition}.

Table~\ref{tab:framework_to_controls} summarizes how these shortcut risks are translated into construction-time controls. The first three risks are addressed by controlling the route-level quantities that influence \(Q_\Sigma^\star\), while prior-knowledge binding is addressed by reducing solver-side cost reduction. The four stages of \textsc{FORT} implement these controls jointly rather than as independent engineering steps.

\begin{table}[t]
\centering
\footnotesize
\renewcommand{\arraystretch}{1.28}
\setlength{\tabcolsep}{4pt}
\begin{tabular}{@{}
>{\centering\arraybackslash}m{0.20\linewidth}
>{\centering\arraybackslash}m{0.22\linewidth}
>{\centering\arraybackslash}m{0.32\linewidth}
>{\centering\arraybackslash}m{0.19\linewidth}
@{}}
\toprule
\textbf{Shortcut risk}
&
\textbf{Collapsed quantity}
&
\textbf{\textsc{FORT} control}
&
\textbf{Implemented in} \\
\midrule

\makecell[c]{\textbf{Single-clue}\\\textbf{selectivity}}
&
\makecell[c]{Small \(\mathcal{P}\):\\ \(s(\mathcal{P})\downarrow\)}
&
\makecell[c]{Generic fact selection\\Low-specificity clue formulation}
&
\makecell[c]{Graph construction\\Question formulation} \\

\midrule

\makecell[c]{\textbf{Evidence}\\\textbf{co-coverage}}
&
\makecell[c]{\(M_{\mathrm{ev}}(\mathcal{P})\downarrow\)}
&
\makecell[c]{Multi-source enrichment\\Derived fact construction}
&
\makecell[c]{Graph construction} \\

\midrule

\makecell[c]{\textbf{Exposed}\\\textbf{constants}}
&
\makecell[c]{\(\mathrm{dep}(\mathcal{P})\downarrow\)}
&
\makecell[c]{Cycle-based initialization\\Name withholding\\Exact-value fuzzing}
&
\makecell[c]{Graph initialization\\Question formulation} \\

\midrule

\makecell[c]{\textbf{Prior-knowledge}\\\textbf{binding}}
&
\makecell[c]{\(U_{\pi_0}(q)\uparrow\)\\\(\Omega(q,\pi_0)\downarrow\)}
&
\makecell[c]{Long-tail root selection\\Adversarial refinement}
&
\makecell[c]{Graph initialization\\Adversarial refinement} \\

\bottomrule
\end{tabular}
\caption{From shortcut risks to \textsc{FORT} controls. The table connects the
difficulty quantities in Section~\ref{dif} with the construction mechanisms in
Section~\ref{sec:methodology}.}
\label{tab:framework_to_controls}
\end{table}

As shown in Figure~\ref{fig:workflow}, \textsc{FORT} consists of four stages.
First, graph initialization selects a long-tail root entity and initializes a
seed structure, reducing prior binding and limiting premature exposure of
downstream constants. Second, graph construction expands the seed into a
heterogeneous evidence graph by collecting, deriving, verifying, and selecting
facts; this stage disperses evidence and avoids overly identifying individual
clues rather than merely increasing graph size. Third, question formulation
chooses an answer-bearing subgraph and renders it as a natural-language question
while withholding intermediate names and fuzzing exact constants, so that
downstream queries are not made executable too early. Finally, adversarial
refinement runs a strong search agent against each draft question and uses
trajectory signatures to repair cases that are shortcut-prone, ambiguous, or
over-fuzzed.

\begin{figure*}[t]
\centering
\includegraphics[width=\textwidth]{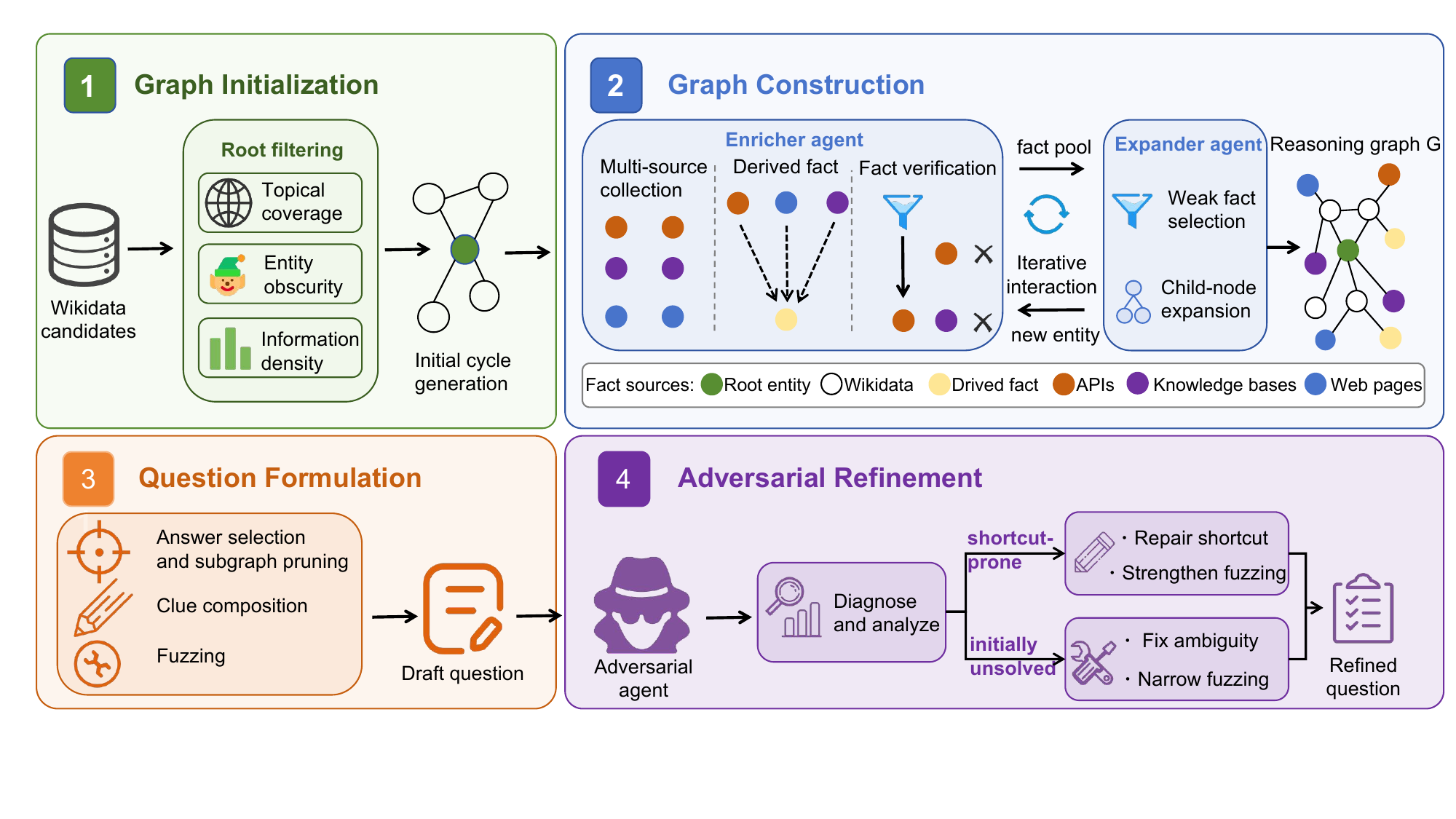}
\caption{Overview of \textsc{FORT}, a shortcut-resistant synthesis pipeline.}
\label{fig:workflow}
\end{figure*}
\subsubsection{Graph Initialization}
\label{sec:graph_initialization}

\textsc{FORT} starts from a root entity \(r\) and initializes a seed subgraph
\(\mathcal{G}_0\). This stage addresses two shortcut risks in
Section~\ref{sec:Shortcut}: \emph{prior-knowledge binding} and
\emph{exposed constants}. For prior-knowledge binding, we prefer long-tail root
entities, since a familiar root may allow the solver to bind the answer from
parametric knowledge, increasing \(U_{\pi_0}(q)\) in
Eq.~\eqref{eq:prior_utility}. For exposed constants, we initialize the seed with
a less linearly exposed structure whenever possible, so that later question
formulation can refer to relations indirectly without revealing downstream
entity names. This helps preserve dependency depth
\(\mathrm{dep}(\mathcal{P})\) in Eq.~\eqref{eq:dependency_depth} and prevents
the cheapest identifying route \(Q_\Sigma^\star\) in Eq.~\eqref{eq:q_star} from
becoming too short.

We select \(r\) from Wikidata under three criteria. First, to ensure topical
coverage, we sample from broad semantic categories and remove abstract concepts
that do not support factual decomposition. Second, to reduce prior-knowledge
binding, we prefer rare entities, especially those without an English Wikipedia
page. Third, to preserve solvability, we run a lightweight pre-search and
discard entities with insufficient external evidence.

Beyond the choice of root entity, the initial graph structure affects how much
of the later search path must be exposed in the final question. A purely linear
seed or expansion path often forces downstream entities, aliases, or literal
values to be verbalized as surface clues. Once these constants appear in the
question, the corresponding downstream queries may already be executable from
the initial state, allowing the solver to skip the intended process of
discovering them from earlier evidence. This creates an exposed-constant
shortcut and may reduce \(\mathrm{dep}(\mathcal{P})\).

To reduce this risk, \textsc{FORT} uses a pre-mined cycle structure whenever
possible. A cycle seed embeds the root in a closed local structure rather than a
purely linear chain, making it easier to formulate relational clues without
explicitly exposing every intermediate entity. To support efficient
initialization, we pre-mine a cycle library from Wikidata and store it as an
entity-to-cycle index. Specifically, we build a pair index from Wikidata edges,
enumerate small cycles, remove duplicate cycles with the same node set, and
filter out unsuitable structures such as hub-based, redundant, or
mass-collaboration cycles. The remaining cycles are stored in an inverted index
\(\mathcal{I}\), where \(\mathcal{I}[u]\) contains all retained cycles that
include entity \(u\). During seed initialization, the initializer looks up
\(\mathcal{I}[r]\). If this set is non-empty, one retained cycle is used as
\(\mathcal{G}_0\); otherwise, \textsc{FORT} falls back to the single-node seed
\((\{r\},\varnothing)\).
\subsubsection{Graph Construction}
\label{sec:graph_construction}

Starting from the initialized seed graph \(\mathcal{G}_0\), \textsc{FORT}
expands it into an evidence graph \(\mathcal{G}\). This stage mainly targets
two route-level shortcut risks in Table~\ref{tab:framework_to_controls}:
\emph{evidence co-coverage} and \emph{single-clue selectivity}. To reduce
evidence co-coverage, \textsc{FORT} collects facts from diverse sources and
constructs derived facts that are less likely to be verified by one retrieved
item, increasing the separate evidence-acquisition effort
\(M_{\mathrm{ev}}(\mathcal{P})\) in
Eq.~\eqref{eq:evidence_dispersion}. To reduce single-clue selectivity, it
avoids facts that are overly identifying in isolation, so that small clue
subsets are less likely to yield a very small candidate pool
\(s(\mathcal{P})\) in Eq.~\eqref{eq:selectivity}. Together, these controls make
cheap identifying routes in Eq.~\eqref{eq:q_star} less likely to collapse.

Nodes in \(\mathcal{G}\) denote real-world entities, and edges denote verified
facts that connect an entity to another referenced entity~\citep{wu2025think,
DBLP:conf/emnlp/SunSWRJZBDZLFWW25}. The expansion is controlled by a depth
limit \(D\) and a node-addition budget \(B\). As shown in
Algorithm~\ref{alg:graph_construction}, \textsc{FORT} maintains a queue
\(\mathcal{Q}\) of expandable nodes and expands the deepest unprocessed node
first. This preference preserves multi-step referenced-entity chains that can
later be verbalized as serial dependencies in the final question.

For each selected node \(v\), an enricher agent collects atomic facts from
external sources, constructs derived facts, and verifies the resulting fact
pool. An expander agent then selects suitable facts, adds the corresponding
edges to \(\mathcal{G}\), and promotes newly referenced entities as child nodes.
New nodes are assigned depth \(\delta_{\mathcal{G}}(v)+1\) and are expanded
only if they remain within the depth limit.

\begin{algorithm}[h]
\small
\caption{Evidence graph construction}
\label{alg:graph_construction}
\begin{algorithmic}[1]
\Require initialized subgraph \(\mathcal{G}_0=(V_0,E_0)\) with root \(r \in V_0\), depth limit \(D\), node-addition budget \(B\)
\Ensure evidence graph \(\mathcal{G}\)

\State \(\mathcal{G} \gets \mathcal{G}_0\)
\State \(\delta_{\mathcal{G}}(u) \gets \mathrm{dist}_{\mathcal{G}}(r,u)\) for all \(u \in V(\mathcal{G})\)
\Comment{initialize node depth}
\State \(\mathcal{Q} \gets \{u \in V(\mathcal{G}) : \delta_{\mathcal{G}}(u) < D\}\)
\State \(\mathcal{V}_{\mathrm{done}} \gets \varnothing\)
\State \(b \gets 0\)
\Comment{number of newly added nodes}

\While{\(\mathcal{Q} \neq \varnothing\) \textbf{and} \(b < B\)}
    \State \(v \gets \arg\max_{u \in \mathcal{Q}} \delta_{\mathcal{G}}(u)\)
    \Comment{expand deepest node first}
    \State \(\mathcal{Q} \gets \mathcal{Q} \setminus \{v\}\)

    \If{\(v \in \mathcal{V}_{\mathrm{done}}\) \textbf{or} \(\delta_{\mathcal{G}}(v) \geq D\)}
        \State \textbf{continue}
    \EndIf

    \State \(\mathcal{V}_{\mathrm{done}} \gets \mathcal{V}_{\mathrm{done}} \cup \{v\}\)

    \State \(\mathcal{A}_v \gets \textsc{CollectAtomicFacts}(v)\)
    \Comment{collect facts from diverse sources}
    \State \(\mathcal{D}_v \gets \textsc{ConstructDerivedFacts}(\mathcal{A}_v)\)
    \Comment{construct facts from multiple atomic facts}
    \State \(\mathcal{P}_v \gets \mathcal{A}_v \cup \mathcal{D}_v\)
    \State \(\mathcal{F}_v \gets \textsc{VerifyFacts}(v,\mathcal{P}_v)\)
    \Comment{filter inconsistent or mismatched facts}
    \State \(\mathcal{S}_v \gets \textsc{SelectFacts}(\mathcal{F}_v)\)
    \Comment{select low-shortcut facts}

    \For{\((v,\rho,w) \in \mathcal{S}_v\)}
        \Comment{\(\rho\): selected fact, \(w\): referenced entity}
        \If{\(w \notin V(\mathcal{G})\) \textbf{and} \(b < B\)}
            \State \(\mathcal{G} \gets \mathcal{G} \cup \{(v,\rho,w)\}\)
            \State \(\delta_{\mathcal{G}}(w) \gets \delta_{\mathcal{G}}(v)+1\)
            \State \(b \gets b+1\)

            \If{\(\delta_{\mathcal{G}}(w) < D\)}
                \State \(\mathcal{Q} \gets \mathcal{Q} \cup \{w\}\)
            \EndIf
        \EndIf
    \EndFor
\EndWhile

\State \Return \(\mathcal{G}\)
\end{algorithmic}
\end{algorithm}

\paragraph{Multi-source enrichment.}
To reduce evidence co-coverage, the enricher avoids drawing multiple selected
facts from the same evidence item. It collects facts from heterogeneous sources,
including Wikidata, open web pages, structured databases, Google Scholar, and
Google Maps. For open web evidence, \textsc{FORT} further encourages different
facts to come from different pages. Facts around the root node are distributed
across sources whenever possible, while non-root nodes use sampled sources for
efficiency because only one or two facts are typically selected for each
node~\citep{zhang2026argus}.

\paragraph{Derived fact construction.}
Atomic facts are directly extracted from evidence items and may be easy to
recover through keyword search. \textsc{FORT} therefore augments the fact pool
with derived facts constructed from multiple atomic facts or source texts. Such
facts are less likely to appear verbatim in a single retrieved item, reducing
exact-match shortcuts and mitigating evidence co-coverage. We use four
constructors: coincidence bridging, count aggregation, numerical relation, and
meta-fact extraction~\citep{shen2025hopweaver}. Table~\ref{tab:derived_fact_examples}
defines each constructor and gives examples.

\begin{table}[t]
\centering
\small
\renewcommand{\arraystretch}{1.18}
\setlength{\tabcolsep}{4pt}
\begin{tabular}{@{}
>{\raggedright\arraybackslash}p{0.15\textwidth}
>{\raggedright\arraybackslash}p{0.29\textwidth}
>{\raggedright\arraybackslash}p{0.50\textwidth}
@{}}
\toprule
\textbf{Constructor} & \textbf{Definition} & \textbf{Example} \\
\midrule

\textbf{\makecell[tl]{Coincidence\\bridging}} &
Derives a relation by matching the same attribute across independently retrieved records. &
Entity \(A\) once held a concert at venue \(V\).\par
Entity \(B\) once held a concert at venue \(V\).\par
\(\Rightarrow\) \(A\) and \(B\) once held concerts at the same venue. \\

\midrule

\textbf{\makecell[tl]{Count\\aggregation}} &
Derives a counting constraint by combining an entity set with independently verified attributes. &
The cast of work \(W\) includes \(A\), \(B\), and \(C\).\par
\(A\), \(B\), and \(C\) have each won award \(M\).\par
\(\Rightarrow\) \(W\) features at least two winners of award \(M\). \\

\midrule

\textbf{\makecell[tl]{Numerical\\relation}} &
Derives a quantitative relation by retrieving numerical values and applying arithmetic comparison. &
Person \(A\) was born in year \(Y_1\).\par
\(A\)'s advisor is \(B\).\par
\(B\) was born in year \(Y_2\).\par
\(\Rightarrow\) \(A\)'s advisor is \(Y_2 - Y_1\) years older than \(A\). \\

\midrule

\textbf{\makecell[tl]{Meta-fact\\extraction}} &
Derives a fact by inspecting, counting, or summarizing the original text of a rich-content entity. &
Source text: song \(A\)'s lyrics.\par
\(\Rightarrow\) The word \(B\) appears at least three times in song \(A\).\par
Source text: Chapter 3 of novel \(C\).\par
\(\Rightarrow\) Chapter 3 of novel \(C\) begins with a memory-oriented passage. \\

\bottomrule
\end{tabular}
\caption{Derived fact constructors and representative examples.}
\label{tab:derived_fact_examples}
\end{table}

\paragraph{Fact verification.}
\textsc{FORT} verifies candidate facts for source consistency and entity
consistency. For source consistency, a fact is discarded if it is contradicted
by the same evidence item, contradicted by other retrieved evidence, or
incompatible with accepted facts in the current graph. For entity consistency,
the evidence item must describe the same real-world entity as the target node,
rather than merely sharing surface strings. This step controls validity rather
than difficulty~\citep{cui2025toward}. Table~\ref{tab:entity_consistency}
summarizes common entity-consistency failures.

\begin{table}[t]
\centering
\small
\renewcommand{\arraystretch}{1.18}
\setlength{\tabcolsep}{3pt}
\begin{tabular}{@{}
>{\raggedright\arraybackslash}p{0.18\columnwidth}
>{\raggedright\arraybackslash}p{0.39\columnwidth}
>{\raggedright\arraybackslash}p{0.35\columnwidth}
@{}}
\toprule
\textbf{Failure mode} & \textbf{Definition} & \textbf{Example case} \\
\midrule

\textbf{Similar names} &
The retrieved page describes a different entity whose name is similar to, or
partially overlaps with, the target name. &
\textbf{Target:} \emph{A Chemical Plant}. \par\smallskip
\textbf{Rejected page:} \textbf{A Organic Chemical Plant}, which refers to a
different company. \\

\midrule

\textbf{Abbreviation ambiguity} &
The same abbreviation refers to another subject, so surface-form overlap is
insufficient for entity matching. &
\textbf{Target:} a research institute abbreviated as \emph{IIS}. \par\smallskip
\textbf{Rejected page:} \textbf{IIS} as \emph{Internet Information Services}. \\

\midrule

\textbf{Temporal or geographic drift} &
The page matches the surface name but differs in time, location, or instance
boundary. &
\textbf{Target:} the \emph{\(N\)-th Director of Bureau \(X\)}. \par\smallskip
\textbf{Rejected page:} the \textbf{\((N-1)\)-th Director} or a same-named
bureau in another region. \\

\midrule

\textbf{Series--edition mismatch} &
The page describes a different granularity level, such as a series instead of a
specific edition, or vice versa. &
\textbf{Target:} \emph{The Voice of China Season~3}. \par\smallskip
\textbf{Rejected pages:} the \textbf{series-level page} or pages about
\textbf{other seasons}. \\

\bottomrule
\end{tabular}
\caption{Common failure modes in entity-consistency filtering.}
\label{tab:entity_consistency}
\end{table}

\paragraph{Generic fact selection.}
Highly representative facts can be overly selective: they make
\(s(\{c_i\})\) in Eq.~\eqref{eq:selectivity} small and may create a
single-clue shortcut. The expander therefore prefers generic facts that are
reliable but less characteristic of the target entity. Such facts usually cannot
identify the target alone, but several of them can jointly determine the answer.
Table~\ref{tab_generic_fact_examples} compares generic and representative facts
for the same entities.

\begin{table}[h]
\centering
\small
\renewcommand{\arraystretch}{1.15}
\setlength{\tabcolsep}{4pt}
\begin{tabular}{@{}
>{\raggedright\arraybackslash}p{0.20\columnwidth}
>{\raggedright\arraybackslash}p{0.36\columnwidth}
>{\raggedright\arraybackslash}p{0.36\columnwidth}
@{}}
\toprule
\textbf{Entity} & \textbf{Generic fact} & \textbf{Representative fact} \\
\midrule

\textbf{Marie Curie} &
Worked at a university in France. &
Won Nobel Prizes in two different scientific fields. \\

\midrule

\textbf{The Eiffel Tower} &
Is located in a European capital city. &
A landmark iron tower in Paris. \\

\midrule

\textbf{The Matrix} &
Is an English-language science fiction film. &
Depicts humans living inside a simulated reality created by machines. \\

\bottomrule
\end{tabular}
\caption{Examples of generic and representative facts for the same entities.
Generic facts have low standalone specificity and are less likely to create
single-clue shortcuts.}
\label{tab_generic_fact_examples}
\end{table}
\subsubsection{Question Formulation}
\label{sec:question_formulation}

A question formulation agent realizes a selected subgraph
\(\mathcal{G}^*\subseteq\mathcal{G}\) as a natural-language question. This
stage determines how the constructed evidence graph is exposed to the solver,
and therefore whether the route-level controls from earlier stages survive in
the final question. The agent selects an answer node \(y^\star\), removes
redundant facts, and keeps clues that are jointly identifying while remaining
individually generic. This targets the single-clue selectivity shortcut: no
individual clue, or very small clue subset, should make
\(s(\mathcal{P})\) in Eq.~\eqref{eq:selectivity} too small.

The formulation agent also controls which constants are exposed on the question
surface. To reduce exposed-constant shortcuts, intermediate node names are
withheld and rendered as generic referring expressions such as ``the artist'',
``the institution'', or ``the work''. This prevents downstream queries from
becoming executable from the initial question. Otherwise, the solver could skip
the intended discovery chain, reduce the dependency depth
\(\mathrm{dep}(\mathcal{P})\) in Eq.~\eqref{eq:dependency_depth}, and shorten
the cheapest identifying route \(Q_\Sigma^\star\) in Eq.~\eqref{eq:q_star}.

For remaining entity names and literal values that must be mentioned,
\textsc{FORT} applies exact-value fuzzing. Fuzzing rewrites exact names,
strings, dates, or numbers into truthful but less directly searchable
constraints~\citep{zhang2025infoagent}. It is not intended to make the question
ambiguous or unverifiable. Instead, it prevents surface constants from directly
retrieving the answer or a near-identifying clue, while preserving the same gold
answer.

Table~\ref{tab:fuzzing_strategies} summarizes the five fuzzing strategies used
during question formulation. They are not applied by a fixed rule. For each
exposed value, the formulation agent selects a strategy according to its
semantic type, distinctiveness, and role in the question.

\providecommand{\exmap}[2]{\emph{#1} \(\rightarrow\) #2}

\begin{table}[t]
\centering
\small
\renewcommand{\arraystretch}{1.16}
\setlength{\tabcolsep}{3pt}
\begin{tabular}{@{}
>{\raggedright\arraybackslash}p{0.21\columnwidth}
>{\raggedright\arraybackslash}p{0.31\columnwidth}
>{\raggedright\arraybackslash}p{0.40\columnwidth}
@{}}
\toprule
\textbf{Strategy} & \textbf{Description} & \textbf{Examples} \\
\midrule

\textbf{\makecell[tl]{Category\\generalization}} &
Replace a specific entity name with a higher-level category, type, or domain
description. &
\exmap{International Monetary Fund}{an international financial institution.}
\par\smallskip
\exmap{Harvard University}{an Ivy League university in the United States.} \\

\midrule

\textbf{\makecell[tl]{Range\\relaxation}} &
Replace an exact numerical value with an interval, approximate range, or
open-ended constraint. &
\exmap{1863}{the second half of the nineteenth century.}
\par\smallskip
\exmap{16,000}{more than ten thousand.} \\

\midrule

\textbf{\makecell[tl]{Meta-attribute\\description}} &
Replace a value with a description of its intrinsic form or surface-level
structure. &
\exmap{September 9}{a date whose month and day use the same number.}
\par\smallskip
\exmap{Hannah}{a person whose given name is a palindrome.} \\

\midrule

\textbf{\makecell[tl]{Arithmetic\\encoding}} &
Express a numerical value indirectly through arithmetic, digit-level, or
number-theoretic constraints. &
\exmap{42 years old}{an age that is a multiple of six.}
\par\smallskip
\exmap{17,921}{a five-digit prime number whose digits sum to 20.} \\

\midrule

\textbf{\makecell[tl]{Contrastive\\exclusion}} &
Constrain the target value by excluding alternative candidates while keeping the
value implicit. &
\exmap{Canada}{a country in North America that does not border Mexico.}
\par\smallskip
\exmap{Wednesday}{a weekday that is not Monday.} \\

\bottomrule
\end{tabular}
\caption{Exact-value fuzzing strategies used during question formulation.}
\label{tab:fuzzing_strategies}
\end{table}
\subsubsection{Adversarial Refinement}
\label{sec:adversarial_refinement}
Construction-time controls reduce shortcut risks, but they cannot guarantee
realized difficulty once a draft question is exposed to a concrete solver and
retrieval environment. \textsc{FORT} therefore performs adversarial refinement
as a trajectory-level calibration step. For each draft question, we run a strong
adversary agent in a realistic search setting and inspect the per-trajectory
counterparts of the signatures defined in
Section~\ref{sec:trajectory_signatures}: realized solving cost, answer hit
time, and prior-shortcut behavior
(Eqs.~\eqref{eq:realized_solving_cost}--\eqref{eq:prior_shortcut_rate}).

Let \(\tau_{\mathrm{th}}\) denote the minimum desired solving cost. A draft is
accepted if the adversary answers correctly, uses at least
\(\tau_{\mathrm{th}}\) retrieval turns, reaches the answer only after a
sufficiently late \(T_{\mathrm{hit}}(\tau)\), and does not bind the answer
before retrieved evidence supports it. These criteria favor trajectories with a
substantial pre-answer search prefix rather than early answer exposure followed
by post-hoc verification.

If the adversary solves a draft too quickly or reaches the answer too early, we
treat the trajectory as evidence of a route-level shortcut. \textsc{FORT} then
repairs the corresponding clue or fact by replacing co-covered evidence,
removing overly selective facts, withholding or fuzzing exposed constants, or
otherwise eliminating the cheap identifying route~\citep{anantha2025nanoflux,
kazoom2025vault}. If the model names the answer before any retrieved evidence
anchors it, the draft is treated as prior-bound and is repaired by replacing the
root entity or strengthening the evidence path. Conversely, if the adversary
fails to solve the draft within the rollout budget, the question is treated as
over-hard or underspecified; \textsc{FORT} narrows over-fuzzed clues, removes
ambiguous facts, or restores necessary constraints. Thus, adversarial refinement
calibrates each draft toward the desired region: solvable, but still
search-heavy. Figures~\ref{fig:adv_case_broadening}
and~\ref{fig:adv_case_narrowing} show two representative refinement cases.
\definecolor{oldspan}{RGB}{255,235,235} \definecolor{newspan}{RGB}{232,246,236} \newcommand{\oldhl}[1]{{\sethlcolor{oldspan}\hl{#1}}} \newcommand{\newhl}[1]{{\sethlcolor{newspan}\hl{#1}}} \newcommand{\casepart}[2]{ \noindent{\bfseries #1}\par \vspace{0.2em} {\scriptsize #2\par} \vspace{0.55em} } 
\begin{figure}[t]
\centering
\begin{AIbox}{Case 1: Repairing Shortcuts}

\casepart{Before.}{
A certain film was released in a certain year. The film has a runtime of a
three-digit minute count whose \oldhl{digits sum to 10}, features protagonists
launching business schemes and a scam escalating to large monetary levels. The
film's composer is a certain individual who plays a certain instrument typically
belonging to the keyboard family.

In that same release year, a certain aviation incident occurred involving a
target being \oldhl{shot down by the Soviet Union}. The incident originated
from a certain institution \oldhl{having 4 runways}. Who is the director of the
film?
}

\casepart{Refinement.}{
The original wording creates a cross-domain shortcut: ``Soviet Union'' and
``4 runways'' uniquely identify a Cold War aviation incident (pinning the
release year), and the precise digit-sum constraint on the runtime then
collapses the candidate films to one. We repair this by broadening the
incident description and relaxing the mathematical filter, forcing the solver
to triangulate through the remaining business-scheme plot, composer, and
keyboard-instrument clues.
}

\casepart{After.}{
A certain film was released in a certain year. The film has a runtime of a
three-digit minute count whose \newhl{digits sum to a multiple of 5}, features
protagonists launching business schemes and a scam escalating to large monetary
levels. The film's composer is a certain individual who plays a certain
instrument typically belonging to the keyboard family.

In that same release year, a certain aviation incident occurred involving a
target being \newhl{shot down by a certain country during the Cold War}. The
incident originated from \newhl{a certain international airport}. Who is the
director of the film?
}

\end{AIbox}
\vspace{-0.4em}
\caption{Shortcut repair by broadening a cross-domain aviation reference and
relaxing a precise runtime filter that together enabled single-query
triangulation of the release year and film.}
\label{fig:adv_case_broadening}
\end{figure}

\begin{figure}[t] \centering \begin{AIbox}{Case 2: Narrowing over-fuzzed clues} \casepart{Initial Question}{ An individual was born in a certain year and died 61 years after that birth year. The individual was a member of an institution headquartered in a building in \oldhl{a certain European capital}, built in a year in the 1910s. This institution was a junior coalition partner to another institution founded in \oldhl{a certain year}. The individual served in a ministerial position involving economic and communications matters in \oldhl{a region} where another individual also served. The other individual worked as a lawyer until 50 years after that other institution's founding year. The individual had been threatened multiple times before death. Who is the individual described above? } \casepart{Refinement}{ The original clues are under-specified. The phrases ``a certain European capital'', ``a region'', and ``a certain year'' allow an unrelated figure to satisfy the same constraints. We narrow these clues with more specific but still anonymized descriptions, removing the spurious answer while preserving the intended obfuscation. } \casepart{Refined Question}{ An individual was born in a certain year and died 61 years after that birth year. The individual was a member of an institution headquartered in a building in \newhl{the capital of a federal republic in Central Europe}, built in a year in the 1910s. This institution was a junior coalition partner to another institution founded in \newhl{the mid-20th century}. The individual served in a ministerial position involving economic and communications matters in \newhl{a federated state} where another individual also served. The other individual worked as a lawyer until 50 years after that other institution's founding year. The individual had been threatened multiple times before death. Who is the individual described above? } \end{AIbox} \vspace{-0.4em} \caption{Failure repair by narrowing over-fuzzed clues.} \label{fig:adv_case_narrowing} \end{figure}

\subsection{Agent Training and Inference}
\label{sec:agent_training_inference}
The synthesis pipeline above produces shortcut-resistant search trajectories,
which are used to train \textsc{FORT}-Searcher. Starting from a base reasoning
model, we perform supervised fine-tuning on trajectories generated from
\textsc{FORT} questions. This step turns the data-side objective of shortcut
resistance into model-side search behavior: the agent is trained to acquire
evidence over multiple turns before committing to an answer, rather than
relying on early answer exposure or parametric priors.

At inference time, \textsc{FORT}-Searcher uses a context-managed search
protocol. Within each rollout, retrieved tool-call results are retained so that
the model can reuse accumulated evidence. If the model reaches the predefined
turn limit without producing a final answer, the current interaction history is
cleared and the agent restarts from the original question.
\section{Experiment}
\label{exp}
\begin{table}[!ht]
  \centering
  \small
  \setlength{\tabcolsep}{4.0pt}
  \renewcommand{\arraystretch}{1.12}
  \begin{tabular}{@{}lcccccc@{}}
    \toprule
    \textbf{Model} 
    & \textbf{BrowseComp} 
    & \textbf{BC-ZH} 
    & \textbf{xbench-05} 
    & \textbf{xbench-10} 
    & \textbf{Seal-0}
    & \textbf{Overall} \\
    \midrule
    \multicolumn{7}{@{}l}{\textit{Proprietary Agents}} \\
    \cmidrule(lr){1-7}
    GPT--5.2--Thinking--xhigh   & 65.8 & 76.1 & --   & --   & --   & -- \\
    GPT-5.5                     & 84.4 & --   & --   & --   & --   & -- \\
    Claude Opus 4.7             & 79.3 & --   & --   & --   & --   & -- \\
    \midrule
    \multicolumn{7}{@{}l}{\textit{Large-scale Open-source Agents}} \\
    \cmidrule(lr){1-7}
    GLM-5                       & 75.9 & 72.7 & --   & --   & --   & -- \\
    DeepSeek-V3.2               & 67.6 & 65.0 & --   & --   & --   & -- \\
    Step 3.5 Flash              & 69.0 & 73.7 & 83.7 & 56.3 & --   & -- \\
    LongCat-Flash-Thinking-2601 & 67.6 & 65.0 & --   & --   & --   & -- \\
    Hy3-preview                 & 67.1 & --   & --   & --   & --   & -- \\
    Kimi-K2.5-Thinking          & 74.9 & --   & --   & --   & 57.4 & -- \\
    Qwen3.5-397B-A17B           & 78.6 & 70.3 & --   & --   & 46.9 & -- \\
    Qwen3.5-122B-A10B           & 63.8 & 69.9 & --   & --   & 44.1 & -- \\
    \midrule
    \multicolumn{7}{@{}l}{\textit{Comparable-size Open-source Agents}} \\
    \cmidrule(lr){1-7}
    GLM--4.7--Flash             & 42.8 & --   & --   & --   & --   & -- \\
    Tongyi DeepResearch         & 43.4 & 46.7 & 75.0 & 47.5 & 45.8 & 51.7 \\
    OpenSeekerV2                & 46.0 & 58.1 & 78.0 & 43.4 & 41.4 & 53.4 \\
    REDSearcher                 & 57.4 & 58.2 & --   & --   & --   & -- \\
    Qwen3.5-35B-A3B             & 61.0 & 69.5 & 77.4 & 50.3 & 41.4 & 59.9 \\
    MiroThinker-1.7-mini                 & 67.9 & 72.3 & 77.2 & \textbf{57.2} & \textbf{48.2} & 64.6 \\
    \midrule
    \textbf{\textsc{FORT}-Searcher}    & \textbf{72.2} & \textbf{75.0} & \textbf{80.8} & \textbf{57.2} & 46.0 & \textbf{66.2} \\
    \bottomrule
  \end{tabular}
  \captionsetup{justification=justified,singlelinecheck=false}
  \caption{Benchmark performance of \textsc{FORT}-Searcher against proprietary and open-source search agents. BC-ZH denotes BrowseComp-ZH; xbench-05 and xbench-10 denote xbench-DeepSearch-2505 and xbench-DeepSearch-2510, respectively. Bold numbers indicate the best results among comparable-size open-source agents. Overall is the average over the five benchmarks and is reported only for agents with complete results.}
  \label{tab:main_results}
\end{table}

\subsection{Experimental Setup}
Our base model is Qwen3-30B-A3B-Thinking-2507~\citep{qwen3technicalreport}, which activates about 3B of 30B parameters at inference and supports a 256K context window. We train the model with supervised fine-tuning only, using the synthesized search trajectories produced by \textsc{FORT}. The training data are preprocessed with sequence packing to improve long-context utilization. We fine-tune for 6 epochs with a global batch size of 64 and a maximum sequence length of 262{,}144 tokens. Training uses bf16 precision and the Adam optimizer with \(\beta_1=0.9\), \(\beta_2=0.95\), \(\epsilon=10^{-8}\), weight decay 0.01, and gradient clipping at 1.0. We use a cosine learning-rate schedule with peak learning rate \(2\times10^{-5}\), minimum learning rate \(10^{-7}\), and 2 warmup iterations. For memory-efficient MoE training, we use tensor parallelism 4, pipeline parallelism 1, context parallelism 1, expert parallelism 4, sequence parallelism, activation recomputation, and a distributed optimizer.
\paragraph{Benchmarks.} We evaluate our model on five challenging agentic benchmarks: BrowseComp~\citep{wei2025browsecomp}, BrowseComp-ZH~\citep{zhou2025browsecomp2}, xbench-DeepSearch-2505~\citep{chen2025xbench}, xbench-DeepSearch-2510~\citep{chen2025xbench}, and Seal-0~\citep{pham2025sealqa}. BrowseComp mainly consists of long-tail entity identification problems, where a short answer must be recovered from multiple indirect and range-based constraints. BrowseComp-ZH evaluates a similar setting over Chinese web sources. xbench-DeepSearch-2505 and xbench-DeepSearch-2510 cover broader real-world deep search tasks, including enumeration, document lookup, finance and so on. Seal-0 further tests search-augmented reasoning under noisy or conflicting evidence. To avoid unfair comparisons, we block domains from Hugging Face in the testing environment.
\paragraph{Baselines.}
We compare \textsc{FORT}-Searcher with leading search agents in three groups.
The first group includes proprietary agents, which provide a frontier closed-system
reference: GPT--5.2--Thinking--xhigh~\citep{singh2025openai},
GPT-5.5~\citep{openai2026gpt55}, and Claude Opus
4.7~\citep{anthropic2026claude47}. The second group includes large-scale
open-source agents with substantially larger model capacity: GLM-5~\citep{zeng2026glm},
DeepSeek-V3.2~\citep{liu2025deepseek}, Step 3.5
Flash~\citep{huang2026step}, LongCat-Flash-Thinking-2601~\citep{team2026longcat},
Hy3-preview~\citep{TencentHy32026Preview}, Kimi-K2.5-Thinking~\citep{team2026kimi},
Qwen3.5-397B-A17B~\citep{AlibabaQwen2026Qwen35_397B}, and
Qwen3.5-122B-A10B~\citep{AlibabaQwen2026Qwen35_397B}. The third group includes
comparable-size open-source agents: GLM-4.7-Flash~\citep{Zai2025GLM47Flash},
Tongyi DeepResearch~\citep{tongyidr}, OpenSeekerV2~\citep{du2026openseeker2},
REDSearcher~\citep{chu2026redsearcher}, Qwen3.5-35B-A3B~\citep{AlibabaQwen2026Qwen35_397B},
and MiroThinker-1.7-mini~\citep{team2026mirothinker}. Since our model activates about
3B parameters at inference, we treat the comparable-size open-source group as
the primary fair-comparison set, while the proprietary and large-scale
open-source groups provide broader performance context.

\paragraph{Context Management.}
For evaluation, we build on the context-management framework of
MiroThinker~\citep{team2026mirothinker}. Within a rollout,
all tool-call results are retained so that the model can reuse previously
retrieved evidence. If the model does not produce a final answer within the
rollout limit, the current trajectory is discarded, the interaction history is
cleared, and the model restarts from the original question~\citep{liu2025deepseek}.
We set the rollout limit to 300 turns for BrowseComp and BrowseComp-ZH, and
200 turns for xbench-DeepSearch-2505, xbench-DeepSearch-2510, and Seal-0.
\subsection{Main Results}

Table~\ref{tab:main_results} compares \textsc{FORT}-Searcher with
proprietary, large-scale open-source, and comparable-size open-source search
agents. Among comparable-size open-source agents, \textsc{FORT}-Searcher
achieves the highest overall score of 66.2, outperforming
MiroThinker-1.7-mini~\citep{team2026mirothinker} by 1.6 points and
Qwen3.5-35B-A3B~\citep{AlibabaQwen2026Qwen35_397B} by 6.3 points. It ranks
first on BrowseComp, BrowseComp-ZH, and xbench-DeepSearch-2505, and ties the
best result on xbench-DeepSearch-2510. On BrowseComp, it improves over
MiroThinker-1.7-mini from 67.9 to 72.2. On BrowseComp-ZH, it improves from
72.3 to 75.0. On xbench-DeepSearch-2505, it reaches 80.8, outperforming the
strongest comparable-size baseline, OpenSeekerV2, by 2.8 points.

Despite relying on SFT only and activating about 3B parameters at inference,
\textsc{FORT}-Searcher remains competitive with larger open-source agents.
On BrowseComp, it outperforms DeepSeek-V3.2 (67.6), Step 3.5 Flash (69.0),
LongCat-Flash-Thinking-2601 (67.6), Hy3-preview (67.1), and
Qwen3.5-122B-A10B (63.8). On BrowseComp-ZH, it achieves 75.0, the best
open-source result among all agents listed in the table. On xbench-DeepSearch-
2505 and xbench-DeepSearch-2510, it also matches or exceeds most large-scale open-source agents with available scores.

This pattern supports the effectiveness of
\textsc{FORT}'s shortcut-resistant data synthesis for improving deep search
behavior under SFT-only training.

\subsection{Context Management}

We further examine the effect of context management on \textsc{FORT}-Searcher. Once a predefined turn limit is reached, the interaction history is cleared, and the model restarts from the original question~\citep{liu2025deepseek}.

As shown in Table~\ref{tab:context_management}, context management brings substantial gains on BrowseComp and BrowseComp-ZH. On BrowseComp, the score increases from 55.9 to 72.2, yielding an absolute improvement of 16.3 points. On BrowseComp-ZH, the score increases from 62.1 to 75.0, yielding an absolute improvement of 12.9 points. In contrast, the gains on the other benchmarks are much smaller. The improvements are 0.7 points on xbench-DeepSearch-2505, 3.1 points on xbench-DeepSearch-2510, and 2.3 points on Seal-0. This suggests that BrowseComp and BrowseComp-ZH are more likely to fall into inefficient search patterns under a limited rollout budget. Context reset provides the model with an additional opportunity to explore a different search path, resulting in higher accuracy. For the other benchmarks, most examples are already solvable within the original rollout budget, making context reset less impactful.

\begin{table}[t]
\centering
\footnotesize
\setlength{\tabcolsep}{4pt}
\renewcommand{\arraystretch}{1.18}
\begin{tabular}{@{}cccccc@{}}
\toprule
\makecell{\textbf{Context}\\\textbf{Management}}
& \textbf{BrowseComp}
& \makecell{\textbf{BrowseComp}\\\textbf{-ZH}}
& \textbf{xbench-05}
& \textbf{xbench-10}
& \textbf{Seal-0} \\
\midrule
w/o & 55.9 & 62.1 & 80.1 & 54.1 & 43.7 \\
w/  & \textbf{72.2} & \textbf{75.0} & \textbf{80.8} & \textbf{57.2} & \textbf{46.0} \\
\bottomrule
\end{tabular}
\caption{Effect of context management on \textsc{FORT}-Searcher across five benchmarks.}
\label{tab:context_management}
\end{table}
\section{Further Analysis}
\subsection{Training-Data Difficulty Analysis}
\label{sec:difficulty_ablation}

We examine whether the trajectory signatures defined in
Section~\ref{sec:trajectory_signatures} capture useful training-data difficulty.
We construct four training sets, each containing 12K
examples and trained
with the same recipe. The first three sets are sampled from existing
open-source deep search data by filtering for average solving costs of
approximately 40, 85, and 140 turns~\citep{xia2025open,team2025mirothinker,lu2025deepdive,pham2025sealqa,du2026openseeker,chu2026redsearcher}. These settings test whether increasing
total trajectory length alone improves downstream search performance. The last
setting is sampled from \textsc{FORT} data with a similar average solving cost,
\(\widehat{\Omega}=140.0\), but with later answer exposure and a lower
prior-shortcut rate. This setting tests whether, under comparable trajectory
length, longer pre-answer search and reduced prior binding provide additional
training value.

As shown in Table~\ref{tab:difficulty-ablation}, increasing the average solving
cost of open-source training data from 40.0 to 140.0 brings moderate gains on
BrowseComp and BrowseComp-ZH. However, the best results are obtained with
\textsc{FORT} data at the same \(\widehat{\Omega}=140.0\), where
\(T_{\mathrm{hit}}\) is delayed to 47.0 and
\(\widehat{p}_{\mathrm{prior}}\) is reduced to 11.4. This suggests that useful
training difficulty is not defined by long trajectories alone, but by extended
pre-answer search with limited prior exposure.

\begin{table}[!ht]
\centering
\footnotesize
\setlength{\tabcolsep}{4pt}
\renewcommand{\arraystretch}{1.12}
\begin{tabular}{@{}ccccc@{}}
\toprule
\multicolumn{3}{c}{\textbf{Data Settings}}
& \multicolumn{2}{c}{\textbf{Results}} \\
\cmidrule(lr){1-3}\cmidrule(lr){4-5}
$\boldsymbol{\widehat{\Omega}}$
& $\boldsymbol{T_{\mathrm{hit}}}$
& $\boldsymbol{\widehat{p}_{\mathrm{prior}}(\%)}$
& \textbf{BrowseComp}
& \textbf{BrowseComp-ZH} \\
\midrule
40.0  & 9.4  & 10.5 & 47.1 & 54.9 \\
85.0  & 16.0 & 15.6 & 48.4 & 54.6 \\
140.0 & 22.3 & 18.1 & 49.5 & 58.1 \\
\midrule
\textbf{140.0}
& \textbf{47.0}
& \textbf{11.4}
& \textbf{52.9}
& \textbf{60.3} \\
\bottomrule
\end{tabular}
\caption{Training results on data under different settings.}
\label{tab:difficulty-ablation}
\end{table}
\subsection{Ablation Studies}
\subsubsection{Shortcut-Resistance Ablation}
To examine whether each shortcut-resistant component contributes to the realized
difficulty of synthesized questions, we conduct a cumulative ablation on 2K
questions. For each configuration, we evaluate the resulting questions using the
same strong search agent and report its solving accuracy on these synthesized
questions. Each row removes one additional component from the row
above. For the long-tail entity ablation, we use GPT-5.5~\citep{openai2026gpt55}
to select common, high-popularity replacement entities and reconstruct the
questions accordingly.

\begin{table}[t]
\centering
\footnotesize
\setlength{\tabcolsep}{4pt}
\renewcommand{\arraystretch}{1.2}
\begin{tabular}{@{}lcccc@{}}
\toprule
\textbf{Configuration}
& \textbf{Acc.}
& $\boldsymbol{\widehat{\Omega}}$
& $\boldsymbol{T_{\mathrm{hit}}}$
& $\boldsymbol{\widehat{p}_{\mathrm{prior}}(\%)}$ \\
\midrule
Full            & \textbf{29.0} & \textbf{141.9} & \textbf{46.5} & \textbf{11.4} \\
\;\;$-$ Cycle Construction           & 36.5 & 124.8 & 42.7 & 12.6 \\
\;\;$-$ Long-Tail Entity Selection     & 42.7 & 101.2 & 39.6 & 15.0 \\
\;\;$-$ Derived-Fact Construction    & 53.2 & \phantom{0}87.7 & 38.3 & 16.3 \\
\;\;$-$ Source Diversity & 57.4 & \phantom{0}74.3 & 36.8 & 17.7 \\
\;\;$-$ Generic-Fact Selection       & 65.0 & \phantom{0}69.5 & 35.3 & 20.3 \\
\;\;$-$ Fuzzing         & 81.6 & \phantom{0}43.7 & 11.8 & 22.3 \\
\bottomrule
\end{tabular}
\caption{Cumulative ablation on shortcut-resistant components. Each row removes one additional component from the row above. \textbf{Acc.} denotes the solving
accuracy of the same strong search agent on the synthesized questions, where
lower values indicate harder questions under this evaluation setting.}
\label{tab:ablation_techniques}
\end{table}

Table~\ref{tab:ablation_techniques} shows that removing shortcut-resistant components consistently makes the synthesized questions easier. Accuracy increases from 29.0 to 81.6, while \(\widehat{\Omega}\) decreases from 141.9 to 43.7. At the same time, \(T_{\mathrm{hit}}\) shifts earlier from 46.5 to 11.8, and \(\widehat{p}_{\mathrm{prior}}\) increases from 11.4 to 22.3. These trends indicate that \textsc{FORT}'s difficulty relies on the joint control of multiple shortcuts. Under this cumulative ablation order, removing fuzzing produces the largest drop in difficulty, suggesting that obfuscation is especially important for increasing search difficulty.

\subsubsection{Adversarial Refinement} \label{sec:adv_refinement_analysis} 

To examine whether adversarial refinement can calibrate questions into a more
appropriate difficulty range, we evaluate its effect on two types of failed
drafts. The first type is \emph{shortcut-prone} drafts, which are solved too
quickly because the adversary can exploit an early identifying route or prior
binding. The second type is \emph{initially unsolved} drafts, for which the
adversary fails to produce a successful trajectory before refinement because the
question is over-fuzzed, ambiguous, or underspecified. We first run the
adversary agent on the original drafts. For shortcut-prone drafts, we repair the
earliest observed shortcut-prone clue. For initially unsolved drafts, we narrow
over-fuzzed clues, remove ambiguous facts, or restore necessary constraints. We
then rerun the adversary under the same search setting and compare the
trajectory signatures before and after refinement.

\begin{table}[H]
\small
\centering
\setlength{\tabcolsep}{4pt}
\renewcommand{\arraystretch}{1.15}
\begin{tabular}{@{}llccc@{}}
\toprule
\textbf{Draft type} & \textbf{Version}
& $\boldsymbol{\widehat{\Omega}}$
& $\boldsymbol{\overline{T}_{\mathrm{hit}}}$
& $\boldsymbol{\widehat{p}_{\mathrm{prior}}(\%)}$ \\
\midrule
Shortcut-prone & Original & 33.9 & 12.4 & 17.0 \\
& Refined & \textbf{82.7} & \textbf{31.4} & \textbf{12.0} \\
\midrule
Initially unsolved & Original & -- & -- & -- \\
& Refined & 123.0 & 50.2 & 13.0 \\
\bottomrule
\end{tabular}
\caption{Effect of adversarial refinement. Metrics for original initially
unsolved drafts are unavailable because they do not produce successful
trajectories before refinement.}
\label{tab:adv_refinement}
\end{table}

As shown in Table~\ref{tab:adv_refinement}, refinement substantially improves
shortcut-prone drafts. After refinement, \(\widehat{\Omega}\) rises from 33.9
to 82.7, \(\overline{T}_{\mathrm{hit}}\) is delayed from 12.4 to 31.4, and
\(\widehat{p}_{\mathrm{prior}}\) decreases from 17.0 to 12.0. This indicates
that refinement suppresses early shortcut routes and induces longer pre-answer
search. For initially unsolved drafts, the original versions do not produce
successful trajectories, while the refined versions become solvable and still
retain substantial search difficulty, with \(\widehat{\Omega}=123.0\) and
\(\overline{T}_{\mathrm{hit}}=50.2\). These results show that adversarial
refinement is not merely a difficulty-increasing step; it calibrates questions
by repairing both shortcut-prone and initially unsolved drafts.
\subsection{Dataset Difficulty Comparison}
\label{sec:dataset_difficulty_comparison}

Section~\ref{sec:trajectory_signatures} uses trajectory signatures to diagnose
existing open-source deep-search datasets and shows that many of them expose the
answer early during successful solving. We now include \textsc{FORT} in the same
diagnostic setting to examine whether shortcut-resistant synthesis alleviates this issue. For a fair comparison, we do not rely on the released trajectories of different
datasets, which may be generated by different agents under different search settings. Instead, for each dataset, we randomly sample 200 questions and re-evaluate them with the same strong search agent under the same retrieval budget. Metrics are computed over successful trajectories.

\begin{table}[t]
\centering
\footnotesize
\setlength{\tabcolsep}{4pt}
\renewcommand{\arraystretch}{1.12}
\begin{tabular}{@{}lccc@{}}
\toprule
\textbf{Source}
& $\boldsymbol{\widehat{\Omega}}$
& $\boldsymbol{T_{\mathrm{hit}}}$
& $\boldsymbol{\widehat{p}_{\mathrm{prior}}(\%)}$ \\
\midrule
InfoSeek~\citep{xia2025open}                  & 20.6  & 5.7  & 2.0  \\
MiroVerse-Voyager~\citep{team2025mirothinker} & 30.6  & 5.9  & 5.7  \\
DeepDive~\citep{lu2025deepdive}               & 47.7  & 15.5 & 7.4  \\
DeepResearch-9K~\citep{pham2025sealqa}        & 47.8  & 3.4  & 27.2 \\
OpenSeeker~\citep{du2026openseeker}           & 84.7  & 9.3  & 31.9 \\
REDSearcher~\citep{chu2026redsearcher}        & 92.1  & 18.7 & 11.8 \\
\midrule
\textbf{\textsc{FORT}}                         & \textbf{141.0} & \textbf{46.9} & 11.0 \\
\bottomrule
\end{tabular}
\caption{Dataset difficulty comparison under trajectory signatures.}
\label{tab:difficulty_comparison}
\end{table}

Table~\ref{tab:difficulty_comparison} shows that \textsc{FORT} yields the
highest solving cost and latest answer exposure under the same diagnostic
setting. Compared with the strongest open-source baseline, REDSearcher,
\(\widehat{\Omega}\) increases from 92.1 to 141.0, and \(T_{\mathrm{hit}}\)
increases from 18.7 to 46.9. This indicates that \textsc{FORT} extends the
pre-answer search prefix rather than merely inducing post-hit verification or
detours. Meanwhile, \(\widehat{p}_{\mathrm{prior}}\) remains comparable to
REDSearcher, suggesting that the increased solving cost is not driven by more
explicit prior-bound behavior.
\subsection{Trajectory-Level Proxies for Difficulty Factors}
\label{sec:trajectory_level_factor_proxies}

To examine how the difficulty factors in Section~\ref{dif} are reflected in
realized search behavior, we conduct a trajectory-level diagnosis on successful
question--trajectory pairs. The exact values of the theoretical quantities in
Section~\ref{dif} are not feasible to compute in open-domain web search. In
particular, exact computation of \(s(\mathcal{P})\),
\(M_{\mathrm{ev}}(\mathcal{P})\), and \(\mathrm{dep}(\mathcal{P})\) would
respectively require enumerating the full answer space, all identifying
constraint subsets, and valid evidence-acquisition routes under the retrieval
interface \(\Sigma\). The solver-side prior utility \(U_{\pi_0}(q)\) would
further require characterizing the model's prior knowledge and the
counterfactual search cost it saves. These objects are generally not enumerable
or directly observable for real web-search tasks.

We therefore do not treat the following measurements as direct estimates of the
theoretical quantities. Instead, we use them as trajectory-level proxies for the
realized effects of the difficulty factors analyzed in our framework.
Specifically, we randomly sample 200 successful question--trajectory pairs from
open-source deep-search data and 200 from \textsc{FORT}. We use GPT-5.5 to
inspect each complete question and trajectory, decompose the question into
atomic clues, and align each clue with the corresponding search steps and
retrieved evidence. Based on this annotation, we define four observable proxies.

For \emph{subset selectivity}, we examine whether the search evidence associated
with each individual clue already narrows the candidate space to a very small
set. We mark a clue as low-width if the clue-specific search evidence yields
only one or two plausible candidate answers. We report the percentage of such
clues as \(R_{\mathrm{low}}\). A lower \(R_{\mathrm{low}}\) indicates that
individual clues are less likely to identify the answer on their own, making
small identifying subsets less available in realized search.

For \emph{evidence dispersion}, we assign each verified clue to one primary
evidence source that supports it. If the same web page or evidence source
verifies multiple clues, it is counted only once. We then compute the normalized
evidence dispersion:
\begin{equation}
R_{\mathrm{ev}}=\widehat{M}_{\mathrm{ev}}/n_c \times 100\%,
\end{equation}
where \(\widehat{M}_{\mathrm{ev}}\) is the number of distinct primary evidence
sources and \(n_c\) is the number of clues in the question. A higher
\(R_{\mathrm{ev}}\) indicates more dispersed evidence, while a lower value
suggests that multiple clues are co-covered by a small number of sources.

For \emph{dependency depth}, we do not estimate the oracle quantity
\(\mathrm{dep}(\mathcal{P})\). Instead, we measure the retrieval cost of the
most costly realized dependency chain, denoted by
\(\widehat{C}_{\mathrm{dep}}\). We first keep only clue-relevant search steps
before the first answer hit. We then check whether each step uses an entity
name, attribute, date, title, alias, or intermediate fact first exposed by
earlier retrieved evidence. If a later search depends on information exposed by
an earlier step, the two steps are considered part of the same realized
dependency chain. Among all realized dependency chains, we select the one with
the highest retrieval cost and count the actual retrieval steps needed to
complete it. A higher \(\widehat{C}_{\mathrm{dep}}\) indicates stronger serial
dependence and greater retrieval effort before the answer is exposed.

For \emph{solver-side prior binding}, we reuse the prior-shortcut indicator
defined in Section~\ref{sec:trajectory_signatures}. We report the fraction of
trajectories where GPT-5.5 judges, based on semantic equivalence rather than
string matching, that the model proposes the gold answer before retrieved
evidence exposes it, denoted by \(\widehat{p}_{\mathrm{prior}}\).

\begin{table}[t]
\centering
\small
\renewcommand{\arraystretch}{1.15}
\setlength{\tabcolsep}{3.5pt}
\begin{tabular}{@{}lccccc@{}}
\toprule
\textbf{Data}
&
\makecell[c]{\(\widehat{\Omega}\)\\\(\uparrow\)}
&
\makecell[c]{\(R_{\mathrm{low}}\)\\\((\%)\downarrow\)}
&
\makecell[c]{\(R_{\mathrm{ev}}\)\\\((\%)\uparrow\)}
&
\makecell[c]{\(\widehat{C}_{\mathrm{dep}}\)\\\(\uparrow\)}
&
\makecell[c]{\(\widehat{p}_{\mathrm{prior}}\)\\\((\%)\downarrow\)} \\
\midrule
Open-source
& 73.7 & 55.2 & 78.7 & 3.1 & 27.0 \\
\textsc{FORT}
& \textbf{139.6} & \textbf{40.2} & \textbf{90.2} & \textbf{5.9} & \textbf{16.0} \\
\bottomrule
\end{tabular}
\caption{
Trajectory-level proxies for realized difficulty factors on 200 successful
question--trajectory pairs from each data source. \(\widehat{\Omega}\) denotes
the empirical average solving cost. \(R_{\mathrm{low}}\) denotes the percentage
of clues whose clue-specific search evidence yields only one or two plausible
candidate answers. \(R_{\mathrm{ev}}\) denotes normalized evidence dispersion.
\(\widehat{C}_{\mathrm{dep}}\) denotes the retrieval cost along the most costly
realized pre-answer dependency chain. \(\widehat{p}_{\mathrm{prior}}\) denotes
the fraction of trajectories where the model proposes the gold answer before
retrieved evidence exposes it.
}
\label{tab:trajectory_factor_proxies}
\end{table}

Table~\ref{tab:trajectory_factor_proxies} shows that \textsc{FORT} moves all
trajectory-level proxies in the desired direction: compared with open-source
deep-search data, it induces higher retrieval effort, less over-selective
individual clues, more dispersed supporting evidence, more costly pre-answer
dependency chains, and fewer explicit prior-bound answers. These proxies provide
an operational connection between the theoretical difficulty factors and
realized search behavior. Although they are not exact estimates of the
quantities in Section~\ref{dif}, they indicate that the difficulty induced by
\textsc{FORT} is not merely a matter of longer trajectories; rather,
\textsc{FORT} makes the factor-level conditions behind shortcut opportunities
less favorable, thereby making cheap identifying routes less available and
evidence acquisition more necessary during actual search.

\section{Conclusion}

We introduce \textsc{FORT}-Searcher, a deep search agent trained with \textsc{FORT}, a shortcut-resistant synthesis framework for generating search-heavy supervision. The central insight is that apparent structural complexity does not necessarily translate into realized search difficulty: a question with many clues, long latent chains, or dense evidence graphs may still be solved cheaply if the solver can exploit evidence co-coverage, single-clue selectivity,
exposed constants, or prior-knowledge binding. High-quality deep search supervision should therefore be judged not only by the complexity of its intended structure, but by whether the intended evidence-acquisition process remains necessary during actual agent search. 

\textsc{FORT} operationalizes this view by controlling shortcut risks across
entity selection, evidence graph construction, question formulation, and
adversarial refinement. These controls produce trajectories in which the model
spends more effort on answer discovery rather than relying on early answer
exposure, post-hit verification, or parametric priors. Using these trajectories,
we train \textsc{FORT}-Searcher with supervised fine-tuning alone. We leave the integration of reinforcement learning with \textsc{FORT} trajectories to future work. Empirically,
\textsc{FORT}-Searcher achieves the best overall performance among
comparable-size open-source search agents, showing that shortcut-resistant
supervision can effectively improve deep search behavior.

Looking forward, deep search agents should move beyond longer web-only
trajectories toward more efficient tool-augmented search and more complex
search-grounded tasks. A richer tool harness can support more targeted evidence
acquisition, structured interaction with external environments, and reliable
intermediate computation. Meanwhile, harder search-based tasks require agents to
integrate heterogeneous evidence, resolve conflicting information, and make
well-supported decisions under uncertainty. Shortcut-aware data construction and
trajectory-level diagnostics could provide a useful insight for training and
evaluating this broader class of search agents.

\bibliography{ref}
\newpage
\appendix
\appendix
\section{Notation Summary}
\label{app:notation}

\begin{table}[H]
\centering
\small
\renewcommand{\arraystretch}{1.15}
\setlength{\tabcolsep}{5pt}
\begin{tabular}{@{}p{0.20\linewidth}>{\RaggedRight\arraybackslash}p{0.72\linewidth}@{}}
\toprule
\textbf{Notation} & \textbf{Meaning} \\
\midrule

\(q=(\mathcal{X},\mathcal{C}_q,\Sigma)\)
& A multi-constraint agentic retrieval task, consisting of an answer space,
a question-constraint set, and a retrieval interface. \\

\(\mathcal{X}\)
& The answer space, i.e., the set of candidate entities or values from which
the gold answer is selected. \\

\(\mathcal{C}_q\)
& The set of constraints expressed by question \(q\). Each constraint
corresponds to one clue-like condition that the answer should satisfy. \\

\(\mathcal{P}\)
& A subset of question constraints, \(\mathcal{P}\subseteq\mathcal{C}_q\),
used to describe partial evidence requirements. \\

\(\Sigma\)
& The retrieval interface available to the agent, such as a web search engine. \\

\(\mathrm{Ans}(\mathcal{P})\)
& The candidate pool that remains after applying a subset of constraints
\(\mathcal{P}\subseteq\mathcal{C}_q\). \\

\(\mathcal{I}_q\)
& The set of identifying constraint subsets, i.e., subsets
\(\mathcal{P}\subseteq\mathcal{C}_q\) such that
\(\mathrm{Ans}(\mathcal{P})=\{y^\star\}\). \\

\(y^\star\)
& The unique gold answer satisfying the full constraint set
\(\mathcal{C}_q\). \\

\(\tau\)
& A realized search trajectory, represented as a sequence of retrieval
queries before the solver commits to an answer. \\

\(\Pi_{\mathrm{post}}\)
& The class of no-prior, no-guessing solvers used to define the pure-posterior
cost \(D_{\mathrm{post}}(q)\). \\

\(D_{\mathrm{post}}(q)\)
& The pure-posterior cost: the minimum expected retrieval cost required by
a no-prior, no-guessing solver to answer \(q\). \\

\(\pi_0\)
& A concrete solver whose realized behavior may exploit shortcuts or prior
knowledge. \\

\(U_{\pi_0}(q)\)
& The solver-side cost reduction available to solver \(\pi_0\), for example
through prior knowledge or shortcut exploitation. \\

\(\Omega(q,\pi_0)\)
& The realized retrieval cost of solver \(\pi_0\) on task \(q\), represented
by the decomposition \(D_{\mathrm{post}}(q)-U_{\pi_0}(q)\). \\

\(Q_\Sigma(\mathcal{P})\)
& The length of the shortest valid evidence-acquisition route that verifies
the identifying subset \(\mathcal{P}\) for the gold answer under retrieval
interface \(\Sigma\). \\

\(Q_\Sigma^\star\)
& The cheapest identifying route over all identifying subsets:
\(\min_{\mathcal{P}\in\mathcal{I}_q} Q_\Sigma(\mathcal{P})\). \\

\(s(\mathcal{P})\)
& The selectivity of constraint subset \(\mathcal{P}\), measured by the size
of the remaining candidate pool \(|\mathrm{Ans}(\mathcal{P})|\). \\

\(M_{\mathrm{ev}}(\mathcal{P})\)
& The minimum number of evidence-acquisition steps needed to verify that
\(y^\star\) satisfies all constraints in \(\mathcal{P}\), ignoring query
executability. \\

\(\mathrm{dep}(\mathcal{P})\)
& The minimum dependency depth required by a valid evidence-acquisition route
for verifying \(\mathcal{P}\). \\

\(T_{\mathrm{hit}}\)
& The answer hit time: the first turn at which the gold answer, or its
normalized alias, appears in the retrieved evidence or solver observation. \\

\(\widehat{\Omega}\)
& The empirical average solving cost measured over a set of realized
trajectories. \\

\(\widehat{p}_{\mathrm{prior}}\)
& The empirical prior-shortcut rate: the fraction of trajectories classified
as prior-shortcut cases, where the solver proposes the answer before acquiring
sufficient search evidence. \\

\bottomrule
\end{tabular}
\caption{Summary of the main notation used in the shortcut-aware difficulty
framework and trajectory diagnostics.}
\label{tab:notation_summary}
\end{table}
\section{Formal Details for the Difficulty Framework}
\label{app:formal_difficulty}

\subsection{Pure-Posterior Policy Class}
\label{app:pure_posterior_policy}

A policy \(\pi\) belongs to the pure-posterior policy class
\(\Pi_{\mathrm{post}}\) if it satisfies three restrictions.

\paragraph{Executable queries.}
At each history \(h_t\), the policy can issue only executable queries. Let
\(\mathcal{K}(h_t)\) be the constants available from the question and from
previously retrieved evidence, and let \(\mathrm{Const}(\theta)\) be the
constants needed to instantiate query \(\theta\). Then any query \(\theta_t\)
issued by \(\pi\) must satisfy
\begin{equation}
\mathrm{Const}(\theta_t)\subseteq \mathcal{K}(h_t).
\label{eq:app_executable_query}
\end{equation}

\paragraph{No-guessing termination.}
The policy may terminate with answer \(x\) only after its accumulated evidence
\(E\) verifies an identifying subset for \(x\). Formally, there must exist
\(\mathcal{P}\subseteq\mathcal{C}_q\) such that
\begin{equation}
\mathrm{Ans}(\mathcal{P})=\{x\}
\quad\text{and}\quad
\{x\}\times\mathcal{P}\subseteq V_E .
\label{eq:app_no_guessing}
\end{equation}

\paragraph{No problem-specific prior knowledge.}
The policy may use basic reading and composition over retrieved evidence, but it
cannot use entity-level prior knowledge specific to the current question. In
particular, it cannot name the target from memory or use a memorized target
profile to bypass evidence acquisition.
\subsection{Valid Evidence-Acquisition Routes}
\label{app:valid_routes}

Let \(\mathcal{P}\in\mathcal{I}_q\) be an identifying subset. A valid
evidence-acquisition route for \((y^\star,\mathcal{P})\) under \(\Sigma\) is an
ordered sequence of queries
\begin{equation}
\boldsymbol{\theta}
=
(\theta_1,\ldots,\theta_m).
\label{eq:app_route_sequence}
\end{equation}
Executing \(\theta_j\) returns evidence
\(\mathrm{Retrieve}_\Sigma(\theta_j)\). Let \(h_{j-1}\) be the history
containing the question and all evidence retrieved before \(\theta_j\). The
route is valid if every query is executable when issued:
\begin{equation}
\theta_j\in \mathrm{Exec}(h_{j-1}),
\qquad j=1,\ldots,m,
\label{eq:app_route_executable}
\end{equation}
and if the accumulated evidence verifies that \(y^\star\) satisfies all
constraints in \(\mathcal{P}\):
\begin{equation}
\{y^\star\}\times\mathcal{P}
\subseteq
V_E,
\qquad
E=
\bigcup_{j=1}^{m}\mathrm{Retrieve}_\Sigma(\theta_j).
\label{eq:app_route_verification}
\end{equation}
The route length is \(|\boldsymbol{\theta}|=m\), and
\begin{equation}
Q_\Sigma(\mathcal{P})
=
\min
\left\{
|\boldsymbol{\theta}|:
\boldsymbol{\theta}
\text{ is valid for }(y^\star,\mathcal{P})\text{ under }\Sigma
\right\}.
\label{eq:app_q_subset}
\end{equation}

\section{Proofs for the Difficulty Framework}
\label{app:difficulty_proofs}

\subsection{Proof of the Structural Lower Bound}
\label{app:proof_structural_lower_bound}

\begin{proposition}
For any task \(q\),
\begin{equation}
D_{\mathrm{post}}(q)
\ge
Q_\Sigma^\star,
\qquad
Q_\Sigma^\star
=
\min_{\mathcal{P}\in\mathcal{I}_q}
Q_\Sigma(\mathcal{P}).
\label{eq:app_structural_lower_bound}
\end{equation}
\end{proposition}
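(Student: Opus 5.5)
The argument is pathwise: I will show that every trajectory a policy \(\pi\in\Pi_{\mathrm{post}}\) can generate, and which ends in a committed answer, has length at least \(Q_\Sigma^\star\). Taking expectations and then the infimum over \(\pi\) in Eq.~\eqref{eq:d_post_def} gives the bound. Since \(Q_\Sigma^\star\) is a deterministic constant, a pathwise bound \(|\tau|\ge Q_\Sigma^\star\) almost surely gives \(\mathbb{E}_{\tau\sim\pi}[|\tau|]\ge Q_\Sigma^\star\) for every \(\pi\), and therefore for the infimum.

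Fix \(\pi\in\Pi_{\mathrm{post}}\) and a realized trajectory \(\tau=(\theta_1,\dots,\theta_m)\) after which \(\pi\) commits to an answer \(x\). Write \(E=\bigcup_{j}\mathrm{Retrieve}_\Sigma(\theta_j)\) for the accumulated evidence. I will use the three restrictions of Appendix~\ref{app:pure_posterior_policy} in turn.

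\textbf{Step 1 (identify the answer).} By the no-guessing termination rule, Eq.~\eqref{eq:app_no_guessing}, there is some \(\mathcal{P}\subseteq\mathcal{C}_q\) with \(\mathrm{Ans}(\mathcal{P})=\{x\}\) and \(\{x\}\times\mathcal{P}\subseteq V_E\). Evidence can only verify true facts, so \(x\in\mathrm{Ans}(\mathcal{P})\). I then need \(x=y^\star\). Here I will restrict to trajectories that succeed, in line with the paper's convention that \(\tau\) is the search run before committing to the gold answer. Under that restriction \(x=y^\star\), and hence \(\mathcal{P}\in\mathcal{I}_q\).

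\textbf{Step 2 (the trajectory is a valid route).} By the executable-query restriction, Eq.~\eqref{eq:app_executable_query}, each \(\theta_j\) is executable from the history \(h_{j-1}\), which consists of the question and the evidence retrieved so far. This is exactly condition Eq.~\eqref{eq:app_route_executable}. Step 1 supplies the verification condition Eq.~\eqref{eq:app_route_verification}. So \(\tau\), viewed as a query sequence, is a valid evidence-acquisition route for \((y^\star,\mathcal{P})\).

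\textbf{Step 3 (compare lengths).} By the definition in Eq.~\eqref{eq:app_q_subset}, \(|\tau|\ge Q_\Sigma(\mathcal{P})\). Since \(\mathcal{P}\in\mathcal{I}_q\), we also have \(Q_\Sigma(\mathcal{P})\ge\min_{\mathcal{P}'\in\mathcal{I}_q}Q_\Sigma(\mathcal{P}')=Q_\Sigma^\star\).

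\textbf{Main obstacle.} The delicate point is the identification \(x=y^\star\) in Step 1, together with the alignment between the policy's notion of executability \(\mathcal{K}(h_t)\) and the route notion \(\mathrm{Exec}(h_{j-1})\). For the first, I will argue as follows. The well-posedness assumption \(\mathrm{Ans}(\mathcal{C}_q)=\{y^\star\}\), combined with soundness of the verification relation \(V_E\) (evidence certifies only true constraint satisfaction), forces any evidence-verified identifying answer to satisfy the constraints it certifies. If \(\pi\) could verify an identifying subset for some \(x\neq y^\star\), that trajectory is by definition unsuccessful. The bound is then stated for successful, committed trajectories, which is how the costs are defined. For the second, I will take \(\mathrm{Exec}(h)=\{\theta:\mathrm{Const}(\theta)\subseteq\mathcal{K}(h)\}\) as the common definition, so the two notions coincide.

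Finally, if \(\pi\) may fail to terminate with positive probability, the expectation is \(+\infty\) and the inequality holds trivially.
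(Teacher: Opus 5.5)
Your proof is correct and follows the paper's argument essentially step for step: the no-guessing rule yields a verified identifying subset \(\mathcal{P}\in\mathcal{I}_q\), and executability makes the realized trajectory a valid route for \((y^\star,\mathcal{P})\), so \(|\tau|\ge Q_\Sigma(\mathcal{P})\ge Q_\Sigma^\star\), after which one takes expectations and the infimum over \(\Pi_{\mathrm{post}}\). One small improvement, on a point the paper's proof also glosses over since \(D_{\mathrm{post}}(q)\) is an unconditional expectation: you need not restrict to successful trajectories, because \(y^\star\) satisfies every constraint in \(\mathcal{C}_q\supseteq\mathcal{P}\), so \(y^\star\in\mathrm{Ans}(\mathcal{P})=\{x\}\) forces \(x=y^\star\) for every committed trajectory of a policy in \(\Pi_{\mathrm{post}}\), and your pathwise bound then holds for all terminating trajectories as you originally claimed.
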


\begin{proof}
Consider any successful trajectory generated by a policy
\(\pi\in\Pi_{\mathrm{post}}\). By the no-guessing restriction in
Appendix~\ref{app:pure_posterior_policy}, the policy can terminate with
\(y^\star\) only after its accumulated evidence verifies an identifying subset
\(\mathcal{P}\in\mathcal{I}_q\). By the executable-query restriction, every
query in the trajectory is executable when issued. Therefore, the successful
trajectory realizes a valid evidence-acquisition route for
\((y^\star,\mathcal{P})\) under \(\Sigma\). Its length is at least
\(Q_\Sigma(\mathcal{P})\), and hence at least
\[
\min_{\mathcal{P}\in\mathcal{I}_q}Q_\Sigma(\mathcal{P})
=
Q_\Sigma^\star.
\]
Taking expectation over pure-posterior trajectories and then the infimum over
\(\Pi_{\mathrm{post}}\) gives
\(D_{\mathrm{post}}(q)\ge Q_\Sigma^\star\).
\end{proof}
\subsection{Proof of the Component Lower Bound}
\label{app:proof_component_lower_bound}

This subsection proves Eq.~\eqref{eq:q_component_lower_bound}. We use the
definitions of \(Q_\Sigma(\mathcal{P})\), \(M_{\mathrm{ev}}(\mathcal{P})\), and
\(\mathrm{dep}(\mathcal{P})\) from
Section~\ref{sec:structural_lower_bound}.

\begin{proposition}
For any identifying subset \(\mathcal{P}\in\mathcal{I}_q\),
\begin{equation}
Q_\Sigma(\mathcal{P})
\ge
\max
\left(
M_{\mathrm{ev}}(\mathcal{P}),
\mathrm{dep}(\mathcal{P})
\right).
\label{eq:app_component_lower_bound}
\end{equation}
\end{proposition}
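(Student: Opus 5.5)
The plan is to prove the two inequalities $Q_\Sigma(\mathcal{P})\ge M_{\mathrm{ev}}(\mathcal{P})$ and $Q_\Sigma(\mathcal{P})\ge \mathrm{dep}(\mathcal{P})$ separately, each by comparing feasible sets of minimization problems, and then take the maximum. Fix an identifying subset $\mathcal{P}\in\mathcal{I}_q$. If no valid route for $(y^\star,\mathcal{P})$ exists, then $Q_\Sigma(\mathcal{P})=+\infty$ under the convention $\min\varnothing=+\infty$, and there is nothing to prove. Otherwise, let $\boldsymbol{\theta}^\circ=(\theta_1,\dots,\theta_m)$ be a valid route attaining $Q_\Sigma(\mathcal{P})=m$, in the sense of Appendix~\ref{app:valid_routes}. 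A minimizer exists because route lengths are positive integers.

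\textbf{Dispersion bound.} By condition \eqref{eq:app_route_verification}, the evidence $E=\bigcup_j \mathrm{Retrieve}_\Sigma(\theta_j)$ satisfies $\{y^\star\}\times\mathcal{P}\subseteq V_E$. Hence $\boldsymbol{\theta}^\circ$ is a query sequence that retrieves evidence verifying $(y^\star,\mathcal{P})$, and so it lies in the feasible set of \eqref{eq:evidence_dispersion}. That feasible set drops the executability requirement \eqref{eq:app_route_executable}, so it is a superset of the set of valid routes. A minimum over a larger set is no larger, which gives $M_{\mathrm{ev}}(\mathcal{P})\le|\boldsymbol{\theta}^\circ|=Q_\Sigma(\mathcal{P})$.

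\textbf{Depth bound.} Since $\boldsymbol{\theta}^\circ$ is valid for $\mathcal{P}$, the definition \eqref{eq:dependency_depth} gives $\mathrm{dep}(\mathcal{P})\le \mathrm{depth}(\boldsymbol{\theta}^\circ)$. It then remains to show $\mathrm{depth}(\boldsymbol{\theta})\le|\boldsymbol{\theta}|$ for every route. A dependency chain is a sequence of queries $\theta_{j_1},\dots,\theta_{j_k}$ in which each query uses a constant first exposed by the evidence of its predecessor. Such a query must be issued after that predecessor, so $j_1<\dots<j_k$. The chain therefore consists of $k$ distinct queries of the route, and $k\le m$. Taking the maximum over chains gives $\mathrm{depth}(\boldsymbol{\theta}^\circ)\le m$, hence $\mathrm{dep}(\mathcal{P})\le Q_\Sigma(\mathcal{P})$.

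The only delicate point is this depth step, because the paper defines ``dependency chain'' informally. I would make it precise by defining a directed graph on the route's queries, with an edge $\theta_i\to\theta_j$ when $\mathrm{Const}(\theta_j)\not\subseteq\mathcal{K}(h_0)$ and some constant of $\theta_j$ first enters $\mathcal{K}$ through $\mathrm{Retrieve}_\Sigma(\theta_i)$. Edges then always point forward in time ($i<j$), so the graph is acyclic and any path has at most $m$ vertices. This holds whether depth counts vertices or edges, and counting edges only makes the bound looser. Combining the two bounds yields \eqref{eq:app_component_lower_bound}. Minimizing over $\mathcal{I}_q$ and invoking the structural bound \eqref{eq:app_structural_lower_bound} then gives the chain of inequalities in \eqref{eq:difficulty_summary_bound}.
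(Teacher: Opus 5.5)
Your proposal is correct and follows essentially the same argument as the paper: take a shortest valid route, observe that it is feasible for the minimization defining $M_{\mathrm{ev}}(\mathcal{P})$, and chain $\mathrm{dep}(\mathcal{P})\le\mathrm{depth}(\boldsymbol{\theta}^\star)\le|\boldsymbol{\theta}^\star|=Q_\Sigma(\mathcal{P})$. Your additions — the $\min\varnothing=+\infty$ case and the forward-pointing dependency graph showing that a chain uses distinct queries of the route — make rigorous the paper's one-line remark that the dependency chain is contained in the route.
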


\begin{proof}
Let \(\boldsymbol{\theta}^\star\) be a shortest valid
evidence-acquisition route for \((y^\star,\mathcal{P})\) under \(\Sigma\).
By definition,
\begin{equation}
|\boldsymbol{\theta}^\star|
=
Q_\Sigma(\mathcal{P}).
\label{eq:app_q_shortest_route}
\end{equation}

First, because \(\boldsymbol{\theta}^\star\) is a valid route, it retrieves
enough evidence to verify that \(y^\star\) satisfies all constraints in
\(\mathcal{P}\). Therefore, \(\boldsymbol{\theta}^\star\) is a feasible
evidence-acquisition sequence for the problem defining
\(M_{\mathrm{ev}}(\mathcal{P})\). Since \(M_{\mathrm{ev}}(\mathcal{P})\) is the
minimum number of evidence-acquisition steps needed for such verification, we
have
\begin{equation}
Q_\Sigma(\mathcal{P})
=
|\boldsymbol{\theta}^\star|
\ge
M_{\mathrm{ev}}(\mathcal{P}).
\label{eq:app_q_ge_mev}
\end{equation}

Second, \(\mathrm{depth}(\boldsymbol{\theta}^\star)\) is the length of the
longest dependency chain among the queries in
\(\boldsymbol{\theta}^\star\). Since this dependency chain is contained in the
route itself, the route cannot be shorter than its longest dependency chain:
\begin{equation}
|\boldsymbol{\theta}^\star|
\ge
\mathrm{depth}(\boldsymbol{\theta}^\star).
\label{eq:app_route_ge_depth}
\end{equation}
By the definition of dependency depth,
\begin{equation}
\mathrm{dep}(\mathcal{P})
=
\min_{\boldsymbol{\theta}\ \text{valid for }\mathcal{P}}
\mathrm{depth}(\boldsymbol{\theta}),
\label{eq:app_dep_definition}
\end{equation}
so for the particular valid route \(\boldsymbol{\theta}^\star\),
\begin{equation}
\mathrm{depth}(\boldsymbol{\theta}^\star)
\ge
\mathrm{dep}(\mathcal{P}).
\label{eq:app_depth_ge_dep}
\end{equation}
Combining Eqs.~\eqref{eq:app_q_shortest_route},
\eqref{eq:app_route_ge_depth}, and \eqref{eq:app_depth_ge_dep} gives
\begin{equation}
Q_\Sigma(\mathcal{P})
\ge
\mathrm{dep}(\mathcal{P}).
\label{eq:app_q_ge_dep}
\end{equation}

Finally, Eqs.~\eqref{eq:app_q_ge_mev} and \eqref{eq:app_q_ge_dep} together
imply
\begin{equation}
Q_\Sigma(\mathcal{P})
\ge
\max
\left(
M_{\mathrm{ev}}(\mathcal{P}),
\mathrm{dep}(\mathcal{P})
\right).
\end{equation}
\end{proof}

This bound concerns \(M_{\mathrm{ev}}\) and \(\mathrm{dep}\) because they are
route-length constraints for a fixed identifying subset. By contrast,
\(s(\mathcal{P})\) does not directly lower-bound
\(Q_\Sigma(\mathcal{P})\). Instead, it affects whether a small subset can become
identifying and therefore enter the minimization that defines
\(Q_\Sigma^\star\).
\subsection{Collapse of the Structural Lower Bound}
\label{app:proof_collapse}

\begin{corollary}
If there exists an identifying subset \(\mathcal{P}\in\mathcal{I}_q\) that can
be verified by a single initially executable query, then
\begin{equation}
Q_\Sigma^\star=1.
\label{eq:app_one_query_collapse}
\end{equation}
\end{corollary}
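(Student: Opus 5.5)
The plan is to prove the two inequalities \(Q_\Sigma^\star\le 1\) and \(Q_\Sigma^\star\ge 1\) separately, working directly from the definition of valid evidence-acquisition routes in Appendix~\ref{app:valid_routes} and the definition \(Q_\Sigma^\star=\min_{\mathcal{P}\in\mathcal{I}_q}Q_\Sigma(\mathcal{P})\) in Eq.~\eqref{eq:q_star}.

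\textbf{Upper bound.} Take the identifying subset \(\mathcal{P}\in\mathcal{I}_q\) and the single query \(\theta_1\) given by the hypothesis, and consider the one-element route \(\boldsymbol{\theta}=(\theta_1)\). Two conditions must be checked.
\begin{itemize}
\item \emph{Executability.} The history \(h_0\) consists of the question alone, and \(\theta_1\) is assumed initially executable, so \(\theta_1\in\mathrm{Exec}(h_0)\). This is Eq.~\eqref{eq:app_route_executable} for \(j=1\).
\item \emph{Verification.} The query verifies \(\mathcal{P}\) for \(y^\star\) by hypothesis. So with \(E=\mathrm{Retrieve}_\Sigma(\theta_1)\) we have \(\{y^\star\}\times\mathcal{P}\subseteq V_E\), which is Eq.~\eqref{eq:app_route_verification}.
\end{itemize}
Hence \(\boldsymbol{\theta}\) is valid for \((y^\star,\mathcal{P})\), giving \(Q_\Sigma(\mathcal{P})\le 1\). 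Since \(\mathcal{P}\) is one of the subsets in the minimization, \(Q_\Sigma^\star\le 1\).

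\textbf{Lower bound.} This is the step that needs care. A route of length \(0\) retrieves no evidence, so \(E=\varnothing\). I would argue that \(V_\varnothing\) contains no pair \((y^\star,c)\), so verification fails for every nonempty \(\mathcal{P}\). For this I would use the standing convention that verification requires retrieved evidence, which is the content of the no-prior, evidence-grounded setting in Appendix~\ref{app:pure_posterior_policy}.

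It remains to exclude \(\mathcal{P}=\varnothing\) from \(\mathcal{I}_q\). Here \(\mathrm{Ans}(\varnothing)=\mathcal{X}\), which equals \(\{y^\star\}\) only in the degenerate case \(|\mathcal{X}|=1\). I would either note this as an implicit nondegeneracy assumption, \(|\mathcal{X}|\ge 2\), or observe that the answer must still be evidenced in any case.

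With the empty subset excluded, every \(\mathcal{P}\in\mathcal{I}_q\) has \(Q_\Sigma(\mathcal{P})\ge 1\), so \(Q_\Sigma^\star\ge 1\). Combined with the upper bound, this gives \(Q_\Sigma^\star=1\). The main obstacle is making the length-\(0\) exclusion rigorous, because the paper never states explicitly that \(V_\varnothing\) verifies nothing; I would state this convention explicitly in the proof.
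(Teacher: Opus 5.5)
Your proposal is correct and follows the same two-inequality argument as the paper. You exhibit the one-query route \((\theta_1)\) to get \(Q_\Sigma^\star\le 1\), then invoke the convention that verification requires at least one retrieval step to get \(Q_\Sigma^\star\ge 1\). Your version is slightly more careful than the paper's, which writes \(Q_\Sigma(\mathcal{P})=1\) directly and simply asserts the convention, never addressing the length-zero route or the degenerate case \(\varnothing\in\mathcal{I}_q\) that arises when \(|\mathcal{X}|=1\).
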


\begin{proof}
If \(\mathcal{P}\in\mathcal{I}_q\) can be verified by a single initially
executable query, then there exists a valid route
\(\boldsymbol{\theta}=(\theta_1)\) for \((y^\star,\mathcal{P})\). Therefore,
\(Q_\Sigma(\mathcal{P})=1\). Since
\[
Q_\Sigma^\star
=
\min_{\mathcal{P}'\in\mathcal{I}_q}
Q_\Sigma(\mathcal{P}'),
\]
we have \(Q_\Sigma^\star\le 1\). Under the route-cost definition, at least one
retrieval step is required to verify evidence, so \(Q_\Sigma^\star\ge 1\).
Thus, \(Q_\Sigma^\star=1\).
\end{proof}
\section{Shortcut Diagnostic Cases}
\label{app:shortcut_cases}

This appendix provides representative trajectory-level examples for the four
shortcut risks discussed in Section~\ref{sec:Shortcut}. Each case shows how an
apparently multi-constraint question can be solved through a cheaper identifying
route or through solver-side prior binding. These cases are sampled from open-source data.

\newcommand{\ShortcutMeta}[2]{%
\vspace{0.45em}

\noindent
{\bf Shortcut type.}
{\scriptsize #1.}

\vspace{0.45em}

\noindent
{\bf Trigger turn.}
{\scriptsize #2.}

\vspace{0.45em}
}

\begin{figure}[H]
\centering
\begin{AIbox}{Case 1: Evidence Co-coverage Shortcut}

\noindent
{\bf Question.}
{\scriptsize
A corporate entity in the Pacific region, which was formally established to
consolidate various holdings, traces its commercial lineage to entrepreneurial
activities in Apia that began in the early decades of the 1900s. These
foundational ventures were launched by a prominent individual whose career was
primarily built in the local hospitality industry. What is the formal name of
the resulting holding company that was incorporated?
}

\vspace{0.45em}

\noindent
{\bf Gold answer.}
{\scriptsize Grey Investment Group.}

\ShortcutMeta{Evidence co-coverage}{Round 1}

\noindent
{\bf Explanation.}
{\scriptsize
The shortcut is triggered in Round 1 by the query
\texttt{Apia Samoa early 1900s holding company founded by hospitality entrepreneur}.
A single search-result snippet directly surfaces \textit{Grey Investment Group}
and simultaneously states that it is a Samoan investment enterprise, specializes
in tourism and hospitality, and has assets across the Pacific region. Since this
single evidence item exposes the answer entity together with multiple
answer-side facts before the model names the answer, it collapses several
intended evidence-acquisition steps into a single retrieval result.
}

\end{AIbox}
\caption{Diagnostic example of an evidence co-coverage shortcut.}
\label{fig:shortcut_case_cocoverage}
\end{figure}

\begin{figure}[H]
\centering
\begin{AIbox}{Case 2: Single-clue Selectivity Shortcut}

\noindent
{\bf Question.}
{\scriptsize
A specific calendar year in the middle of the twentieth century can be defined by four converging threads. First, it is the year a foundational international convention on the legal protection of displaced persons was opened for signature, administered by a UN agency. Second, that same UN agency has a mandate that significantly overlaps with a much larger intergovernmental body—one that formally became a UN-related organization more than five decades after the first human-crafted object reached another celestial body. Third, in this identical year, on a day in the final week of June, a major U.S. television network publicly transmitted its first regular programming using a full-color system. Finally, in that same twelve-month period, in a nation through which the Paraná River flows, a documented submission grappling contest took place between two high-profile figures from the world of martial arts. Determine which single year satisfies all four of these conditions.
}

\vspace{0.45em}

\noindent
{\bf Gold answer.}
{\scriptsize 1951.}

\ShortcutMeta{Single-clue selectivity}{Round 1}

\noindent
{\bf Explanation.}
{\scriptsize
The shortcut is triggered by the Round 1 query
\texttt{first regular full-color programming US television network final week June}.
This query uses only one thread of the question, namely the full-color television
broadcast clue. The retrieved results immediately identify June 1951 events, and
the model then treats 1951 as the main candidate before checking the other
threads. Thus, a single clue is already selective enough to surface the gold
answer year, while the remaining constraints mainly serve as post-hoc
verification.
}

\end{AIbox}
\caption{Diagnostic example of a single-clue selectivity shortcut.}
\label{fig:shortcut_case_singleclue}
\end{figure}

\begin{figure}[H]
\centering
\begin{AIbox}{Case 3: Exposed-constant Shortcut}

\noindent
{\bf Question.}
{\scriptsize
Identify an individual who held a regional leadership role in a major North
American forestry labor union, an acronym that could also denote an international
water association or a wrestling organization, during the latter half of the
1980s. This person was publicly associated with a city whose name derives from
an Indigenous word for ``grizzly bear,'' located within a tourism region in the
southern interior of the westernmost Canadian province. When the provincial
government announced new labor legislation in the mid-1980s, this individual
publicly stated that the people of this province want to work, not protest. The
same city hosts an airport that provides temperature observations used to model
evaporation rates for a glacially formed lake in a north-south oriented valley.
This individual shares the same full name as a professional pickleball player
from Austin, Texas. Who is this union leader?
}

\vspace{0.45em}

\noindent
{\bf Gold answer.}
{\scriptsize Jack Munro.}

\ShortcutMeta{Exposed constant}{Round 2}

\noindent
{\bf Explanation.}
{\scriptsize
The shortcut is triggered in Round 2 by the query
\texttt{IWA union leader Kelowna 1980s "want to work not protest"}.
The question surface exposes a highly searchable phrase from the answer entity's
public statement: ``want to work, not protest.'' The model reuses this phrase
nearly verbatim, and the retrieved results directly name Jack Munro. Because the
searched phrase is a distinctive attribute of the target person rather than a
generic intermediate clue, the exposed constant directly drives answer retrieval.
}

\end{AIbox}
\caption{Diagnostic example of an exposed-constant shortcut.}
\label{fig:shortcut_case_exposedconstant}
\end{figure}

\begin{figure}[H]
\centering
\begin{AIbox}{Case 4: Prior-knowledge Binding Shortcut}

\noindent
{\bf Question.}
{\scriptsize
In a coastal town where a mechanical marvel from the age of telegraphy still
stands, the town's historical associations trace back to a Roman conqueror.
That conqueror later defied the republic by crossing a boundary river, setting
in motion conflicts that culminated in a naval battle whose aftermath led to
the demise of a legendary queen. Who was that queen?
}

\vspace{0.45em}

\noindent
{\bf Gold answer.}
{\scriptsize Cleopatra.}

\ShortcutMeta{Prior-knowledge binding}{Round 0}

\noindent
{\bf Explanation.}
{\scriptsize
The shortcut occurs before any tool call. In Round 0, the model already names
the gold answer by stating that the legendary queen may be Cleopatra and links
Caesar's crossing to the civil war, Actium, and Cleopatra's death. The first
retrieved observation containing ``Cleopatra'' appears only later, after a query
that already includes the answer name. The decisive answer binding therefore
comes from the model's prior knowledge rather than evidence acquired during the
search trajectory.
}

\end{AIbox}
\caption{Diagnostic example of a prior-knowledge binding shortcut.}
\label{fig:shortcut_case_prior}
\end{figure}



\end{document}